\newcommand{\argmax}{\operatorname*{argmax}} %
\newcommand{\var}{\operatorname*{Var}}
\newcommand{\kl}{\operatorname*{KL}}
\newcommand{\E}{\mathbb{E}}
\newcommand{\R}{\mathbb{R}}
\newcommand{\sep}{|}
\newcommand{\s}{\mathcal{S}}
\newcommand{\A}{\mathcal{A}}
\newcommand{\gc}{\mathcal{G}}
\newcommand{\hc}{\mathcal{H}}
\newcommand{\un}{\mathbf{1}}
\let\Ginclude@graphics\@org@Ginclude@graphics 
\title[Geometric Value Iteration]{Geometric Value Iteration: Dynamic Error-Aware KL Regularization for Reinforcement Learning}
\author{\Name{Toshinori Kitamura} \Email{kitamura.toshinori.kt6@is.naist.jp}\\
\Name{Lingwei Zhu} \Email{zhu.lingwei.zj5@is.naist.jp}\\
\Name{Takamitsu Matsubara} \Email{takam-m@is.naist.jp}\\
\addr Nara Institute of Science and Technology, Nara, JAPAN}
\begin{document}

\maketitle

\begin{abstract}
   The recent boom in the literature on entropy-regularized reinforcement learning (RL) approaches reveals that Kullback-Leibler (KL) regularization brings advantages to RL algorithms by canceling out errors under mild assumptions.
   However, existing analyses focus on fixed regularization with a constant weighting coefficient and do not consider cases where the coefficient is allowed to change dynamically.
   In this paper, we study the dynamic coefficient scheme and present the first asymptotic error bound.
   Based on the dynamic coefficient error bound, we propose an effective scheme to tune the coefficient according to the magnitude of error in favor of more robust learning.
   Complementing this development, we propose a novel algorithm, Geometric Value Iteration (GVI), that features a dynamic {\it error-aware} KL coefficient design with the aim of mitigating the impact of errors on performance.
   Our experiments demonstrate that GVI can effectively exploit the trade-off between learning speed and robustness over uniform averaging of a constant KL coefficient.
   The combination of GVI and deep networks shows stable learning behavior even in the absence of a target network, where algorithms with a constant KL coefficient would greatly oscillate or even fail to converge.
\end{abstract}
\begin{keywords}
Geometric Policy Interpolation; Error-Awareness; KL Regularization; Reinforcement Learning
\end{keywords}

\section{Introduction}\label{sec:introduction}

The recently impressive successes of reinforcement learning (RL) rely heavily on the use of nonlinear function approximators such as deep networks \citep{mnih2015human,Silver-nature2017}.
However, the power of nonlinear approximators comes at a cost that approximation or estimation errors can easily go uncontrolled due to stochastic approximation using noisy samples \citep{fu19a}, leading to performance oscillation or even divergent learning \citep{lillicrap2015continuous,fujimoto2018addressing}.
While error propagation has been studied in detail in the literature of approximate dynamic programming (ADP) methods \citep{Munos08-finiteTimeAVI,scherrer2015approximate}, little is understood in the case of nonlinear approximation, such as whether the analyses in ADP still hold true.
In practice, several empirical tricks like target networks or asynchronous updates need to be used to ensure stability and convergence for learning with deep networks \citep{mnih2015human,pmlr-v80-haarnoja18b}.

The recent boom in the literature on entropy-regularized RL highlights the use of Kullback-Leibler (KL) divergence as a regularization term in the reward \citep{azar2012dynamic,kozunoCVI,vieillard2020leverage}.
It is known that by adding KL regularization, errors are \emph{grouped} in the sense that they are accumulated as a summation (more details in Section \ref{sec:notations}).
In standard $L_{p}$ norm error propagation analysis, this summation is within the norm, as compared to the summation-over-norm of the standard ADP results \citep{bertsekas1996neuro,Munos08-finiteTimeAVI}.
Under mild assumptions such as the sequence of errors having martingale difference, the summation of errors asymptotically cancels out.
This brings a great advantage to deep RL, where properly addressing errors is paramount \citep{fu19a,fujimoto2018addressing}.
However, there is a trade-off between learning speed and robustness in play since the policies change less between iterations with KL regularization.
By setting the KL regularization coefficient as a constant, we lose the ability to dynamically trade-off speed and robustness, and hence the resultant algorithms might not be suitable for robustness-critical problems.
In practice, wild performance oscillation can indeed be observed \citep{nachum2017trust}, since summation is still sensitive to outliers and the errors at the early stage of learning are typically large.

In this paper, we propose dynamically adjusting the KL regularization coefficient according to the error made at each iteration, with the motivation being that for iterations with large error, large KL regularization weight should be imposed to prevent the agent from going in the wrong update direction.
We prove the resulting error propagation bound has the form of \emph{norm-over-weighted-summation} (Theorem~\ref{thm:GVI}), which has the potential to more effectively improve the trade-off between learning speed and robustness than uniform averaging. 

The rest of the paper is organized as follows. We introduce the notations used and review existing constant KL coefficient RL algorithms in Section \ref{sec:notations}.
In Section \ref{sec:policy_iteration}, we study ADP with a dynamic KL coefficient. 
Specifically, we discuss our novel design of the KL coefficient, which is based on the maximum iteration-wise error for weighting the effect of regularization.
Based on the KL coefficient design, in Section \ref{sec:deeprl} we present a practical RL algorithm, Geometric Value Iteration (GVI).
We evaluate GVI on simple mazes and a set of classic control tasks in Section \ref{sec:experiments}.
Our experiments show that GVI can converge faster and more stably and that, moreover, GVI with a deep neural network demonstrates significantly stabilized learning compared to constant regularization, even without target networks. 
Related works and a discussion are given in Section~\ref{sec:related_work}.
Section~\ref{sec:conclusion} presents our conclusions.

\section{Background and Notations}
\label{sec:notations}

We consider a discounted Markov Decision Process (MDP) defined by a tuple $\{\s,\A,P,r,d_0,\gamma\}$, where $\s$ is the finite state space, $\A$ is the finite set of actions, $P\in\Delta_\s^{\s\times\A}$ is the transition kernel (writing $\Delta_X$ as the probability simplex over the set $X$, and $X^Y$ is the set of applications from $X$ to $Y$), $r\in\mathbb{R}^{\s\times\A}$ is the reward function bounded by $r_\text{max}$, $d_0\in\Delta_S$ is the distribution of the initial state, and $\gamma\in(0,1)$ is the discount factor. 
A policy $\pi\in\Delta_\A^\s$ maps states to a distribution over actions, and we write the expectation over trajectories induced by $\pi$ and $d_0$ as $\E_\pi$, where we omit the notation $d_0$ for simplicity.
For a policy $\pi$, the state-action value function is defined as $q_\pi(s,a) = \E_\pi\left[\sum_{t=0}^\infty \gamma^t r(S_t,A_t)\middle \vert S_0=s, A_0=a\right]$, and the (unnormalized) discounted visitation frequency is defined as $d_{\pi}(s) = \E_\pi\left[\sum_{t=0}^{\infty} \gamma^{t} P\left(S_t=s\right)\right]$. 

Following \citet{vieillard2020leverage}, we define a component-wise dot product $ \langle f_1,f_2 \rangle = \\ (\sum_{a} f_1(s,a) f_2(s,a))_{s}\in\R^\s $ for $f_1,f_2\in\R^{\s\times\A}$, which is useful for expectation calculations. 
We define $P v = \left(\sum_{s'} P(s'|s,a) v(s')\right)_{s,a}\in\R^{\s\times\A}$ for $v\in\R^\s$. 
We also define a policy-induced transition kernel $P_\pi$ as $P_\pi q = P\langle \pi, q\rangle$. 
We write the Bellman evaluation operator $T_\pi q = r + \gamma P_\pi q$ and its unique fixed point as $q_\pi$. 
An optimal policy satisfies $\pi_* \in\argmax_{\pi\in\Delta^\s_\A} q_\pi$, and $q_*=q_{\pi_*}$.
We denote the set of greedy policies w.r.t. $q\in\R^{\s\times\A}$ as $\gc(q) = \argmax_{\pi\in\Delta_\A^\s} \langle q,\pi\rangle$. 
When scalar functions are applied to vectors, their applications should be understood in a point-wise fashion.
KL divergence and Shannon entropy are the two most widely used entropy terms for regularization. 
We express KL divergence as $\kl(\pi_1||\pi_2) = \langle \pi_1, \ln \pi_1 - \ln \pi_2 \rangle \in\R^\s$ and Shannon entropy (or simply entropy) as $\mathcal{H}(\pi) = \langle-\pi, \ln\pi\rangle \in \R^\s$.

\paragraph{Mirror Descent Value Iteration}

\citet{vieillard2020leverage} provides a generalized framework for KL-regularized ADP schemes.
The framework, termed Mirror Descent Policy Iteration (MD-PI), is given in Eq.~\eqref{eq:GVI_noentropy1}, where the equation sign indicates the \emph{component-wise update} of a vector.
While MD-PI can also consider the popular Shannon entropy regularization~\citep{haarnoja2017reinforcement}, Shannon entropy does not provide an advantage in the theoretical error propagation analysis of MD-PI~\citep{vieillard2020leverage}.
Accordingly, in this paper, we focus only on the KL regularization.

\begin{equation}
    \text{ MD-PI } \;
    \begin{cases}
        \pi_{k+1} &= \gc_{\pi_k}^{\lambda}(q_k) \\
        q_{k+1} &= (T^{\lambda}_{\pi_{k+1}\sep\pi_k})^m q_k + \epsilon_{k+1}
    \end{cases}.\label{eq:GVI_noentropy1}
\end{equation}
In Eq.~\eqref{eq:GVI_noentropy1}, we start from a uniform policy $\pi_0$ and evaluate the next policy by applying the greedy operator $\gc_\mu^{\lambda}(q) = \argmax_{\pi\in\Delta^\s_\A} \left(\langle \pi, q\rangle - \lambda \kl(\pi||\mu)\right)$ with an arbitrary baseline policy $\mu \in\Delta^\s_\A$.
Usually, $\mu$ is chosen as the previous policy.
With the obtained regularized greedy policy, we evaluate its action value function $q_{k+1}$ by applying $m$ times the regularized Bellman operator $T_{\pi\sep\mu}^{\lambda_k} q = r + \gamma P \left(\langle \pi, q\rangle - \lambda_k \kl(\pi||\mu)\right)$. 
Setting $m=1, \infty$ corresponds to value iteration and policy iteration schemes, respectively.
Setting $m$ to any other value implies the use of approximate modified policy iteration \citep{puterman1978modified,scherrer2015approximate}.
We call Eq.~\eqref{eq:GVI_noentropy1} with $m=1$ the Mirror Descent Value Iteration (MD-VI).
The error term $\epsilon_{k+1}$ is a vector of the same shape as the action value function,
and it is typically assumed that the greedy step is free of error \citep{vieillard2020leverage}.

By the Fenchel conjugacy \citep{cvx-opt-boyd}, the greedy policy $\pi_{k+1}$ can be analytically obtained as $\gc_\mu^{\lambda}(q) \propto \mu\exp{(\frac{q}{\lambda})}$~\citep{geist19-regularized}.
By choosing $\mu = \pi_{k}$, a direction induction shows that the MD-PI policy $\pi_{k+1}$ averages all previous $Q$-values as $\pi_{k+1} \propto \pi_k \exp \frac{q_k}{\lambda} \propto \dots \propto \exp\frac{1}{\lambda}\sum_{j=0}^k q_j$.
Since the errors are additive, $\pi_{k+1}$ also averages errors from previous iterations.
Indeed, the following theorem formally shows that the finite-time bound of MD-VI depends on the {\it norm of the average} of the accumulated errors (the extension to $m>1$ remaining an open question).
\begin{theorem}[\citet{vieillard2020leverage}]
    \label{thm:mdmpi_noEntropy}
    Define the maximum value of $\|q_k\|_\infty$ as $q_\text{max}$. 
    The $\ell_\infty$-bound of MD-VI is 
    \begin{equation}\label{eq:constant_bound}
    \| q_* - q_{\pi_{k+1}}\|_\infty \leq \frac{2}{(1-\gamma)}{\frac{1}{k}}\left(\left\|\sum_{j=1}^k \epsilon_j\right\|_\infty + 2 q_\text{max}+\lambda\gamma\ln |\mathcal{A}| \right).
    \end{equation}
\end{theorem}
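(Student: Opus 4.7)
The plan is to exploit the explicit softmax form of the MD-VI iterates. Unrolling the greedy step $\pi_{k+1}\propto\pi_k\exp(q_k/\lambda)$ from a uniform initialization yields $\pi_{k+1}\propto\exp\bigl(\frac{1}{\lambda}\sum_{j=1}^{k}q_j\bigr)$. Writing $h_k:=\sum_{j=1}^{k}q_j$, $\bar q_k:=h_k/k$, and the log-partition $M_k(s):=\lambda\ln\sum_{a}\exp(h_k(s,a)/\lambda)$, the Fenchel identity $\langle\pi_{k+1},q_k\rangle-\lambda\,\kl(\pi_{k+1}\|\pi_k)=\lambda\ln\langle\pi_k,\exp(q_k/\lambda)\rangle=M_k-M_{k-1}$ (with the convention $M_{-1}=\lambda\ln|\mathcal{A}|$) rewrites the MD-VI update into the closed scalar recursion $q_{k+1}=r+\gamma P(M_k-M_{k-1})+\epsilon_{k+1}$. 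Summing from $j=1$ to $k$, the log-partition differences telescope and, after division by $k$, I obtain an averaged Bellman-like equation
\begin{equation*}
\bar q_k \;=\; r \;+\; \frac{\gamma P\,M_{k-1}}{k} \;+\; \frac{1}{k}\sum_{j=1}^{k}\epsilon_j \;-\; \frac{\gamma\lambda\ln|\mathcal{A}|}{k}.
\end{equation*}
This is the step that converts the sum-of-norms accumulation of unregularized ADP into the norm-of-sum error appearing in the theorem.

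Next I would interpret $M_{k-1}/k$ as a soft value function. Fenchel duality gives $M_{k-1}=\langle\pi_k,h_{k-1}\rangle+\lambda\mathcal{H}(\pi_k)$, and using $\|\bar q_k-\bar q_{k-1}\|_\infty=O(q_\text{max}/k)$ together with $\mathcal{H}(\pi_k)\le\ln|\mathcal{A}|$ shows $\gamma P M_{k-1}/k=\gamma P\langle\pi_{k+1},\bar q_k\rangle+O\bigl((\lambda\ln|\mathcal{A}|+q_\text{max})/k\bigr)$. Substituting, $\bar q_k$ is a fixed point of the Bellman evaluation operator $T_{\pi_{k+1}}$ up to an $O(1/k)$ residual plus $\bar\epsilon_k:=\frac{1}{k}\sum_j\epsilon_j$, so the standard contraction argument produces $\|\bar q_k-q_{\pi_{k+1}}\|_\infty\le\frac{1}{1-\gamma}\bigl(\|\bar\epsilon_k\|_\infty+O(1/k)\bigr)$. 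Symmetrically, the greedy inequality $\langle\pi_*,\bar q_k\rangle\le\max_a\bar q_k$ combined with the softmax-to-max gap $\max_a\bar q_k-\langle\pi_{k+1},\bar q_k\rangle\le\lambda\ln|\mathcal{A}|/k$ yields a matching bound on $\|q_*-\bar q_k\|_\infty$, and the triangle inequality delivers the factor $2/(1-\gamma)$ and the pre-factor $\frac{1}{k}\bigl(\|\sum_j\epsilon_j\|_\infty+2q_\text{max}+\lambda\gamma\ln|\mathcal{A}|\bigr)$ stated in the theorem.

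The main obstacle lies in the careful bookkeeping of this last step: one must track exactly how the boundary term $M_{-1}=\lambda\ln|\mathcal{A}|$, the initialization (contributing $q_\text{max}/k$ through $\|\bar q_k\|_\infty$ bounds), and the softmax-versus-greedy gap recombine, through the triangle inequality, into precisely the constants $2q_\text{max}$ and $\lambda\gamma\ln|\mathcal{A}|$ with the correct factor of $\gamma$. The telescoping and contraction pieces are elementary; the delicate part is controlling the off-by-one index shifts between $\bar q_k$, $\bar q_{k-1}$, and $\pi_{k+1}$ so that all residuals stay $O(1/k)$ with constants matching the theorem.
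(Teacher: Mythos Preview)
Your approach is essentially the one the paper uses. Note first that the paper does not prove Theorem~\ref{thm:mdmpi_noEntropy} directly (it is quoted from \citet{vieillard2020leverage}); the relevant comparison is with the proof of Theorem~\ref{thm:GVI} in Appendix~\ref{apdx:gpi-proof}, which specializes to Theorem~\ref{thm:mdmpi_noEntropy} when $\eta_j\equiv 1/\lambda$. Your log-partition telescoping $q_{k+1}=r+\gamma P(M_k-M_{k-1})+\epsilon_{k+1}$ is exactly Lemma~\ref{lemma:q_to_h} written in terms of $M_k=\lambda\ln\langle\un,\exp(h_k/\lambda)\rangle$, and your averaged Bellman equation for $\bar q_k$ is Lemma~\ref{lemma:h_bellman_tau_zero} with constant weights. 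So the skeleton is the same.

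The one place your write-up diverges, and where you should be careful, is the final assembly. You propose bounding $\|q_*-\bar q_k\|_\infty$ and $\|\bar q_k-q_{\pi_{k+1}}\|_\infty$ separately and adding them. The second bound is fine, but for the first the greedy/softmax inequality you invoke only yields a \emph{one-sided} estimate $T_{\pi_*}\bar q_k\le T_{\pi_{k+1}}\bar q_k+\gamma(\lambda/k)\ln|\mathcal{A}|$; it does not by itself control $\bar q_k-q_*$ from above, so the norm $\|q_*-\bar q_k\|_\infty$ is not directly bounded this way. The paper sidesteps this by never splitting at $\bar q_k$: it writes the single identity
\[
q_*-q_{\pi_{k+1}}=(I-\gamma P_{\pi_*})^{-1}(T_{\pi_*}h_k-h_k)-(I-\gamma P_{\pi_{k+1}})^{-1}(T_{\pi_{k+1}}h_k-h_k),
\]
uses $q_*-q_{\pi_{k+1}}\ge 0$ so that only an upper bound is needed, replaces $T_{\pi_*}h_k$ by the regularized $T_{\pi_{k+1}}^{0,1/Z_k}h_k$ via the one-sided greedy inequality, and then bounds the common residual $T_{\pi_{k+1}}^{0,1/Z_k}h_k-h_k$ (your $M_{k-1}/k$ term). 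This is precisely where the factor $2/(1-\gamma)$ and the constants $2q_{\max}$ and $\gamma\lambda\ln|\mathcal{A}|$ fall out cleanly. Swapping your triangle-inequality step for this decomposition removes the two-sided difficulty and makes the constant bookkeeping you flag as ``the main obstacle'' straightforward.
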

In Theorem~\ref{thm:mdmpi_noEntropy}, the optimality gap $\| q_* - q_{\pi_{k+1}}\|_\infty$ is expressed in terms of errors $\frac{1}{k}\sum_{j=1}^k \epsilon_j$, which are averaged with respect to the uniform distribution.
This corresponds to having a constant coefficient-KL regularization throughout learning (i.e., fixing $\lambda$).
Under mild assumptions, such as the sequence of errors having martingale difference under the natural filtration \citep{azar2012dynamic}, the summation of errors asymptotically cancels out.
However, the asymptotic cancelation of errors happens only under specific conditions.
When the conditions are not satisfied, having a constant coefficient assumes the errors contribute equally (i.e., $\frac{1}{k}$) to the gap $\| q_* - q_{\pi_{k+1}}\|_\infty$, which is often not the case, since in the early stages of learning the errors are typically large and require more attention.

Our motivation comes from the intuition of weighting down large errors using large regularization coefficients $\lambda_{k}$, and thus the weighted average of errors $\frac{1}{\sum_{j=1}^{k}{1}/{\lambda_j}}\|\sum_{j=1}^{k}{\epsilon_j}/{\lambda_{j}}\|_\infty$ could be much smaller than that of uniform averaging $\frac{1}{k}\|\sum_{j=1}^{k}{\epsilon_j}\|_\infty$.
This corresponds to setting a different KL coefficient for each iteration.
Intuitively, different coefficients allow for more robust convergence and potentially faster convergence since the magnitude of $\frac{1}{\sum_{j=1}^{k}{1}/{\lambda_j}}({\epsilon_j}/{\lambda_{j}})$ could be much smaller than $\frac{1}{k}\epsilon_j$ if we are allowed to specify the coefficient $\lambda_{j}$. 
This motivation prompts the use of a {\it dynamic error-aware} KL coefficient design that is detailed in Section \ref{sec:policy_iteration}.

\section{Dynamic Error-Aware KL Regularization}\label{sec:policy_iteration}

While MD-VI is generally robust against zero-mean errors because the summation of errors asymptotically cancels out, in some situations the errors fail to cancel each other out and result in bad performance of MD-VI.
As a concrete example, consider the following errors induced every $K$ step: 
\begin{equation}\label{eq:noise}
\begin{cases}
    \epsilon_{k} \sim \text{unif}(0, K) \; &\text{if}\; k=K, 2K, \dots \\
    \epsilon_{k} = 0 \; &\text{otherwise}
\end{cases}.
\end{equation}
This artificial example can be likened to a two-state MDP case (Figure~\ref{fig:maze_mdp}) where the agent starting from state 1 continues to loop onto itself with zero cost and probability $\frac{K-1}{K}$, and with probability $\frac{1}{K}$ the agent goes to state 2 with cost $\texttt{unif}(0,K)$ and then back again to state 1.

\begin{figure}[t]
    \centering
    \begin{minipage}{0.48\textwidth}
        \centering
        \includegraphics[width=0.85\textwidth]{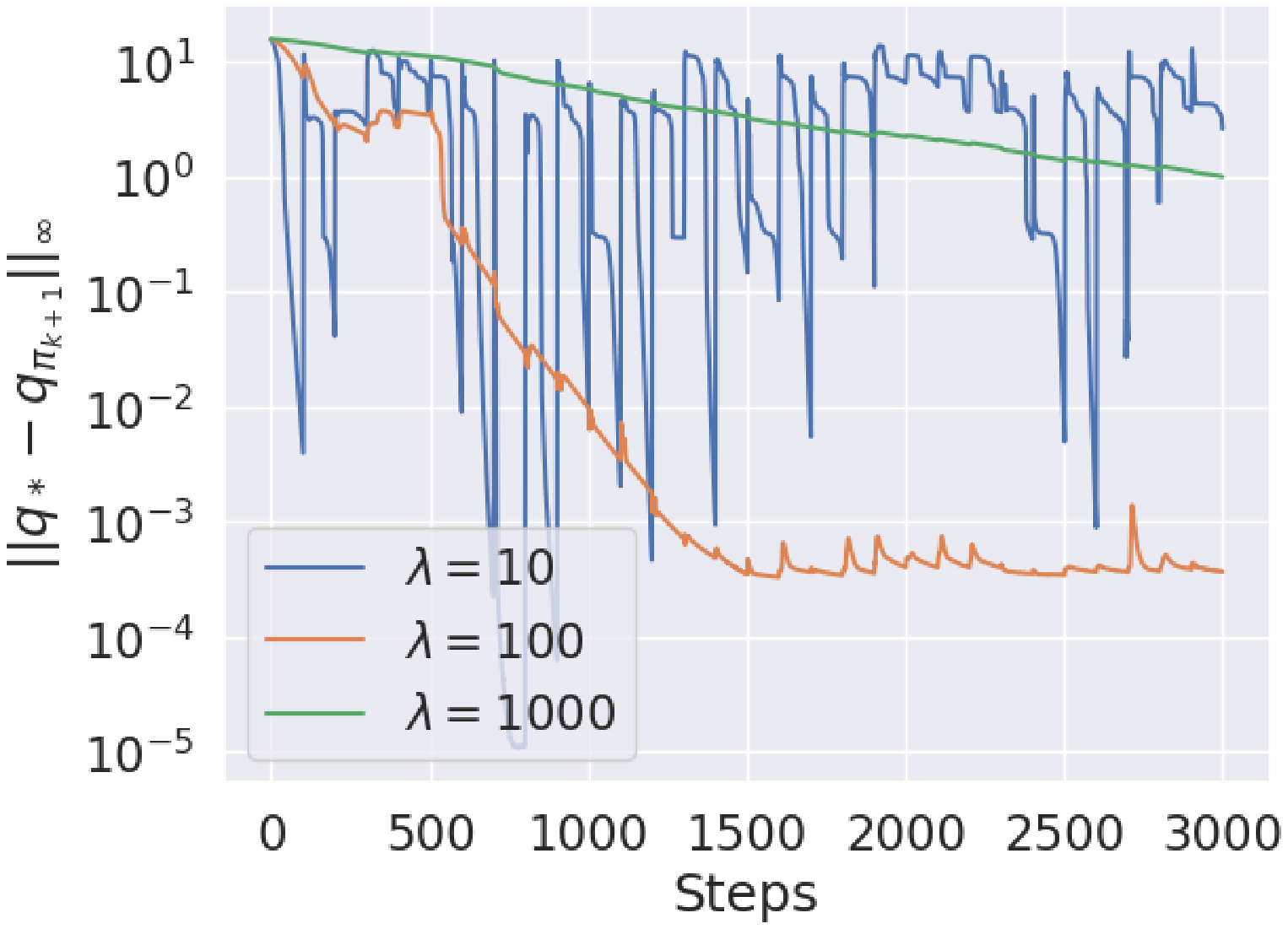}
        \caption{MD-VI in maze under different values of $\lambda$ with error generated by Eq.~\eqref{eq:noise}.
        In this case, the optimal regularization strategy is time-dependent.}
        \label{fig:md-mpi-maze}
    \end{minipage}\hfill
    \begin{minipage}{0.48\textwidth}
        \centering
        \includegraphics[width=0.95\textwidth]{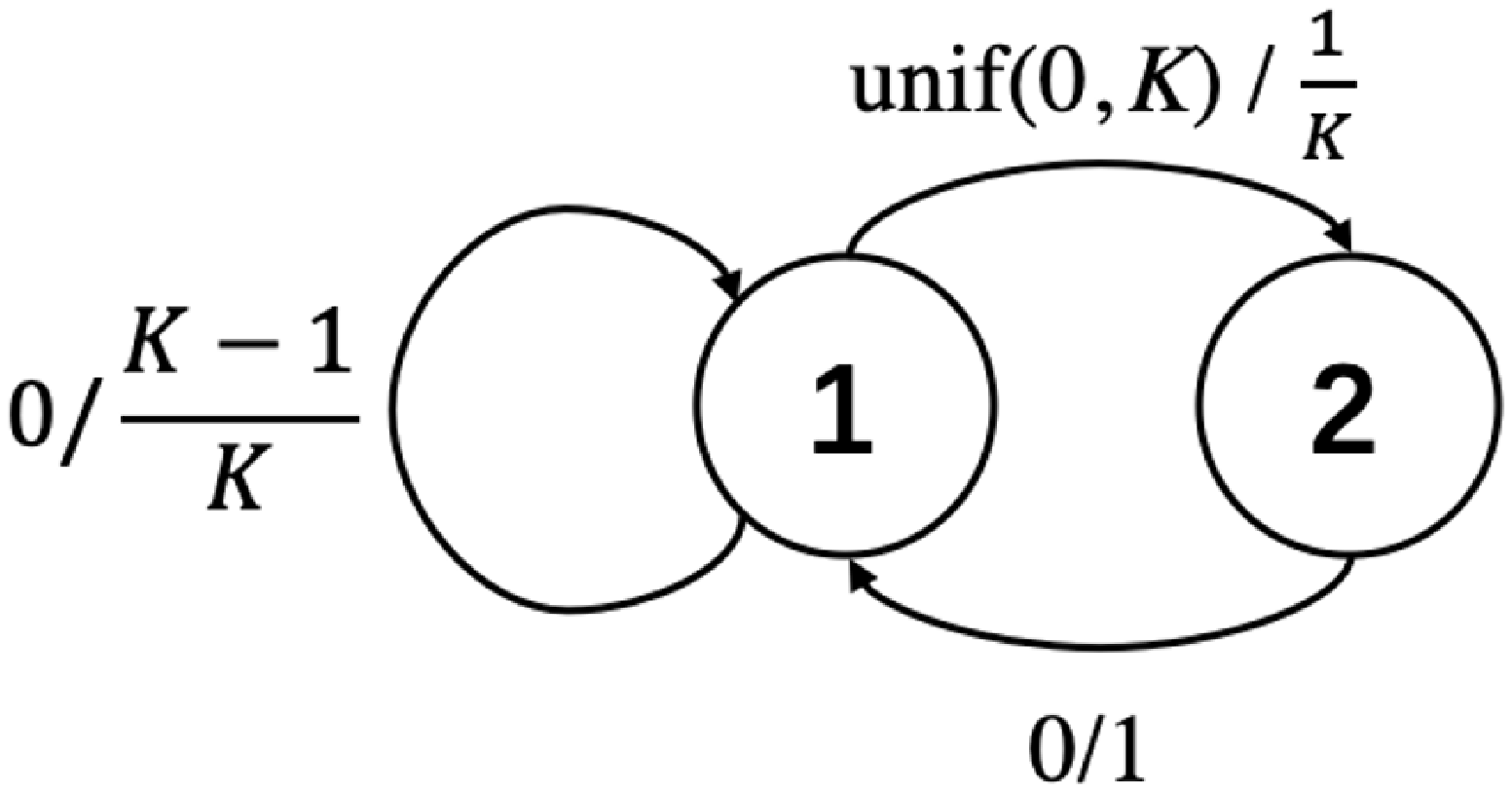}
        \caption{A two-state MDP instantiation of Eq.~\eqref{eq:noise}. Numbers on the transitions indicate cost/probability.
        Such errors might occur in updating weights of neural networks.}
        \label{fig:maze_mdp}
    \end{minipage}
\end{figure}
Figure~\ref{fig:md-mpi-maze} illustrates the optimality gap of MD-VI under randomly generated $5\times5$ mazes and errors of Eq.~\eqref{eq:noise} with $K=100$ (see the environment's details in Appendix~\ref{apdx:maze}). 
In this simple setup, trials with small regularization coefficients (blue line) fail to converge due to performance oscillation brought by the error in Eq.~\eqref{eq:noise}, while larger regularization (yellow and green lines) achieves convergence to the optimal policy but at a much slower rate.
A suitable strategy in this example is obviously an {\it error-aware} regularization strategy: being conservative only when the errors are present, and greedy otherwise.

To overcome the limitations of the constant-weight regularization scheme, we study the following regularized PI scheme with a dynamic KL coefficient $\lambda_{k}$:
\begin{equation}
    \begin{cases}
        \pi_{k+1} &= \gc_{\pi_k}^{\lambda_k}(q_k)
        \\
        q_{k+1} &= (T^{\lambda_k}_{\pi_{k+1}\sep\pi_k})^m q_k + \epsilon_{k+1}
    \end{cases}.\label{eq:dkl-mdmpi}
\end{equation}
We derive the error-aware regularization bound for the policy iteration case.
The following theorem provides a bound on the optimality gap of Eq.~\eqref{eq:dkl-mdmpi}.

\begin{theorem}\label{thm:GVI}
    Define $\eta_k=1/\lambda_k$ and $Z_k=\sum^k_{j=0}\eta_j$. 
    The $\ell_\infty$-bound of Eq.~\eqref{eq:dkl-mdmpi} with $m=1$ is 
    \begin{equation}\label{eq:GVI-bound}
    \| q_* - q_{\pi_{k+1}}\|_\infty \leq \frac{2}{(1-\gamma)}\frac{1}{Z_{k}}\left(\left\|\sum_{j=1}^k \eta_j\epsilon_j\right\|_\infty + (\eta_{k+1} + \eta_0 + \sum^k_{j=0}\left|\eta_{j+1}-\eta_j\right|)q_\text{max}+\gamma\ln|A|\right).
    \end{equation}
\end{theorem}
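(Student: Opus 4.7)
I adapt the constant-$\lambda$ argument of Theorem~\ref{thm:mdmpi_noEntropy} by propagating the dynamic coefficients $\eta_k=1/\lambda_k$ through a Fenchel-duality telescoping, then converting the resulting weighted average of iterates into a bound on $\|q_*-q_{\pi_{k+1}}\|_\infty$ via two one-sided comparisons. Induction on $\pi_{j+1}\propto\pi_j\exp(\eta_j q_j)$ starting from uniform $\pi_0$ first gives the closed form $\pi_{j+1}\propto\exp(S_j)$ with $S_j:=\sum_{i=0}^j\eta_i q_i$. Writing $L_j:=\log\sum_a\exp(S_j(a))$ (with $L_{-1}=\log|\A|$), the Fenchel identity $\max_\pi\langle\pi,q\rangle-\lambda\kl(\pi\|\mu)=\lambda\log\langle\mu,\exp(q/\lambda)\rangle$ at $\mu=\pi_j$ collapses the regularized greedy evaluation to $\langle\pi_{j+1},q_j\rangle-\lambda_j\kl(\pi_{j+1}\|\pi_j)=\lambda_j(L_j-L_{j-1})$, so $q_{j+1}=r+\gamma P\lambda_j(L_j-L_{j-1})+\epsilon_{j+1}$.

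\textbf{Telescoping and comparison.} Multiplying the above by $\eta_j$ cancels $\lambda_j$ and makes the log-sum-exp differences telescope upon summing $j=0,\ldots,k$:
\[
\sum_{j=0}^k\eta_j q_{j+1}=Z_k r+\gamma P\bigl(L_k-\log|\A|\bigr)+\sum_{j=0}^k\eta_j\epsilon_{j+1},
\]
whose division by $Z_k$ yields an identity for the weighted average $\bar q_k:=Z_k^{-1}\sum_{j=0}^k\eta_j q_{j+1}$. Fenchel duality provides the two-sided control $\langle\pi_*,S_k\rangle+\hc(\pi_*)\le L_k=\langle\pi_{k+1},S_k\rangle+\hc(\pi_{k+1})$, from which substitution gives an equality $\bar q_k=r+\gamma P_{\pi_{k+1}}(S_k/Z_k)+\mathrm{err}$ and a lower bound $\bar q_k\ge r+\gamma P_{\pi_*}(S_k/Z_k)+\mathrm{err}'$, with $0\le\hc\le\log|\A|$ absorbing a $\gamma\log|\A|/Z_k$ residual. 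Because $S_k/Z_k$ is a weighted average of the $q_j$ rather than the $q_{j+1}$, summation by parts
\[
\sum_{j=0}^k\eta_j\bigl(q_j-q_{j+1}\bigr)=\eta_0 q_0+\sum_{j=1}^k(\eta_j-\eta_{j-1})q_j-\eta_k q_{k+1}
\]
converts it to $\bar q_k$ plus a residual whose $\ell_\infty$-norm is bounded by $q_\text{max}\bigl(\eta_0+\eta_{k+1}+\sum_{j=0}^k|\eta_{j+1}-\eta_j|\bigr)$ after absorbing a minor index shift.

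\textbf{Contraction, triangle inequality, and main obstacle.} Using $q_{\pi_{k+1}}=(I-\gamma P_{\pi_{k+1}})^{-1}r$, monotonicity of $(I-\gamma P_{\pi_*})^{-1}$, and $\|(I-\gamma P_\pi)^{-1}\|_\infty\le(1-\gamma)^{-1}$, the equality version above yields $\|\bar q_k-q_{\pi_{k+1}}\|_\infty\le(1-\gamma)^{-1}Z_k^{-1}[\cdots]$ and the inequality version yields $\|q_*-\bar q_k\|_\infty\le(1-\gamma)^{-1}Z_k^{-1}[\cdots]$ with the same bracketed quantity; a triangle inequality then contributes the factor 2 and produces Eq.~\eqref{eq:GVI-bound}. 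The main obstacle is the summation-by-parts step: dynamic $\eta_j$ breaks the clean telescoping of the constant-$\lambda$ case, and the cross-terms $(\eta_j-\eta_{j-1})q_j$ are precisely what generate the total-variation factor $\sum|\eta_{j+1}-\eta_j|$. The principal bookkeeping challenge is arranging both one-sided comparisons (against $\pi_{k+1}$ and against $\pi_*$) so that they share the same $(1-\gamma)^{-1}Z_k^{-1}$ prefactor and the entropy correction lands exactly as the $\gamma\log|\A|$ summand of the claimed bound.
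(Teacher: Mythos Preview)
Your plan is essentially the paper's own argument, reorganised around the shifted average $\bar q_k=Z_k^{-1}\sum_{j}\eta_j q_{j+1}$ rather than the paper's pivot $h_k=Z_k^{-1}\sum_{j}\eta_j q_j$; the Fenchel telescoping of $L_j-L_{j-1}$, the summation by parts that generates $\sum|\eta_{j+1}-\eta_j|$, and the two one-sided comparisons against $\pi_{k+1}$ and $\pi_*$ correspond directly to the paper's Lemmas~\ref{lemma:q_to_h}--\ref{lemma:h_bellman_tau_zero} and its use of Lemma~\ref{lemma:value_residual} (their residual $X_k=\sum_{j}(\eta_{j+1}-\eta_j)T^{1/\eta_j}_{\pi_{j+1}|\pi_j}q_j$ is precisely your Abel-sum boundary, evaluated on $Tq_j$ rather than $q_j$).

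Two small repairs are needed. First, from the one-sided inequality $\bar q_k\ge T_{\pi_*}(S_k/Z_k)+\text{err}'$ and monotonicity of $(I-\gamma P_{\pi_*})^{-1}$ you obtain only a \emph{componentwise} upper bound on $q_*-\bar q_k$, not a bound on $\|q_*-\bar q_k\|_\infty$; so the ``triangle inequality'' as you phrase it does not go through. The paper does not bound the two pieces separately: it adds the componentwise upper bound on $q_*-h_k$ to the identity for $h_k-q_{\pi_{k+1}}$, then invokes $q_*-q_{\pi_{k+1}}\ge 0$ to supply the missing lower side, and only after that takes $\|\cdot\|_\infty$. Second, your telescoping weights $\epsilon_{j+1}$ by $\eta_j$, producing $\sum_j\eta_{j-1}\epsilon_j$ rather than the claimed $\sum_j\eta_j\epsilon_j$; the paper obtains the latter weighting by expanding $Z_{k+1}h_{k+1}=\sum_j\eta_j q_j$ (so the weight sits on the same iterate that carries the error) and absorbing the index mismatch into $X_k$ instead of into the summation-by-parts residual.
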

\begin{proof}
    See Appendix~\ref{apdx:gpi-proof} for the proof.
\end{proof}
Note that this bound generalizes the bound of MD-VI, since Eq.~\eqref{eq:GVI-bound} matches Eq.~\eqref{eq:constant_bound} when $\eta = 1 / \lambda$.
As with MD-VI, this bound features a linear dependency of errors on the horizon $\frac{1}{1-\gamma}$. 
The main difference is the {\it weighted average} of the errors instead of the {\it uniform average} for MD-VI.
This weighted average error term intuitively motivates the design of regularization coefficients $\eta_k=1/\lambda_k$.

First, minimizing the optimality gap implies minimizing $\|\frac{1}{Z_k}\sum_{j=1}^k {\eta_j}\epsilon_j\|_\infty$, which is the norm of the weighted arithmetic mean of errors.
Because $\epsilon_j$ is a random variable for all $j$, with ${\eta'}_j=\frac{\eta_j}{Z_k}$, the mean and the variance of the weighted arithmetic mean are given by $\sum^k_{j=1}{\eta'}_j\E[\epsilon_j]$ and $\sum^k_{j=1}{{\eta'}_j}^2 \var[\epsilon_j]$, respectively.
This in turn suggests that $\eta_j$ should be inversely scaled according to the magnitude of $\epsilon_j$ to restrict potentially erroneous updates where errors have huge means or variances.
By recalling $\epsilon_j$ is a vector, we scale $\eta_j$ according to the infinity norm as $\eta_{j} = \frac{1}{\alpha_1\|\epsilon_{j}\|_{\infty}}$, where $\alpha_1$ is used for uniformly scaling all of the coefficients.
Note that $\alpha_1$ does not appear in the error-dependent term since it appears in both numerator $\eta_{j}$ and denominator $Z_{k}$.

In deep RL, hyperparameters are typically and gradually decayed instead of changed abruptly.
This highlights the importance of stability in learning with neural networks, which we address here.
Given the above design choice, we impose an additional constraint that no huge increase from $\eta_k$ to $\eta_{k+1}$ is allowed: such an increase during learning can be measured by $\frac{1}{Z_k}\sum^{k}_{j=0}\left|\eta_{j+1}-\eta_j\right|$, which appears in the second term of the error bound Eq.~\eqref{eq:GVI-bound}.
We do not allow the term to diverge by restricting $\eta_{j+1} > 2\eta_j$, which makes $\frac{1}{Z_k}\sum^{k-1}_{j=0}\left|\eta_{j+1}-\eta_j\right|$ larger than $1$.
To this end, we gradually decay the regularization coefficient by introducing another hyperparameter $\alpha_2$, such that $\lambda_{k} = \alpha_2\lambda_{k-1}$ with $\alpha_2 \in (0, 1)$ generally close to one.

The above-mentioned design choices can be summarized as the following dynamic KL coefficient design:
\begin{equation}\label{eq:kl-dynamic}
    \lambda_k = \max(\alpha_1\|\epsilon_k\|_\infty, \alpha_2\lambda_{k-1}),
\end{equation}
where $\alpha_1 \in \R^+$ and $\alpha_2 \in (0, 1)$.

\section{Geometric Value Iteration}\label{sec:deeprl}

In this section we propose a novel algorithm based on the dynamic KL regularization coefficient design of the previous section.
While it is straightforward to incorporate it in the general MD-VI scheme of Eq.~\eqref{eq:dkl-mdmpi}, a crucial subtlety stands in the way of achieving better performance: we know $\pi_{k+1} \propto \exp(\sum_{j=1}^{k}q_{j})$ from Section \ref{sec:notations}, which requires remembering all previous value functions. 
In practice, approximation such as information projection would have to be used \citep{vieillard2020momentum}, which brings errors to the policy update step.

Leveraging the very recent idea of \emph{implicit KL regularization} \citep{vieillard2020munchausen}, it is possible to circumvent the need for remembering all previous values in MD-VI by augmenting the reward with a log-policy term, whose formulation is given in Eq.~\eqref{eq:MPI}.
The reward function is augmented by the term $\ln\pi_{k+1}$ weighted by the KL coefficient $\lambda$:
\begin{equation}\label{eq:MPI}
\;
\left\{\begin{array}{l}
\pi_{k+1}=\argmax_{\pi\in\Delta^\s_\A}\left\langle\pi, q_{k}\right\rangle - \lambda \hc(\pi) \\
q_{k+1}= {\lambda\ln \pi_{k+1}} + r + \gamma P\left\langle\pi_{k+1}, q_{k}-\lambda \ln \pi_{k+1} \right\rangle
\end{array}\right.
.
\end{equation}
Eq.~\eqref{eq:MPI} corresponds to implicitly performing KL regularization, and hence there is no need for remembering previous values, that is, computing the term $\ln\pi_{k+1}$ suffices.

While Eq.~\eqref{eq:MPI} provides an easy-to-use scheme for our dynamic KL coefficient by replacing $\lambda$ with $\lambda_k$, the term $\lambda_{k}\ln\pi_{k}$ could cause numerical issues when $\lambda_{k}$ has a huge value.
For numerical stability, we propose further transforming Eq.~\eqref{eq:MPI} as follows:

\begin{equation}\label{eq:GVI2}
\left\{\begin{array}{l}
\pi_{k+1}=\argmax_{\pi\in\Delta^\s_\A}\left\langle\pi, q_{k}\right\rangle + \hc(\pi)\\
q_{k+1}=  \ln \pi_{k+1} + \frac{r}{\lambda_{k+1}} + \frac{\lambda_k}{\lambda_{k+1}} \gamma P\left\langle\pi_{k+1}, q_{k}-\ln \pi_{k+1} \right\rangle
\end{array}\right.
.
\end{equation}
Additional clipping might also be necessary to restrict the magnitude of $\ln\pi_{k+1}$.
We can show that the scheme of Eq.~\eqref{eq:GVI2} is equivalent to the formulation of Eq.~\eqref{eq:dkl-mdmpi}, which we formally state below.

\begin{theorem}\label{thm:GVI2}
For any $k\geq 0$, by defining ${q'}_k=\lambda_{k+1}\left(q_k-\ln \pi_k\right)$, we have
\begin{equation}
\eqref{eq:GVI2} \Leftrightarrow 
\left\{\begin{array}{l}
\pi_{k+1}=\argmax_{\pi\in\Delta^\s_\A}\left\langle\pi, {q'}_{k}\right\rangle - \lambda_k\kl({\pi}\|{\pi_k})\\
{q'}_{k+1}=r + \gamma P\left\langle\pi_{k+1}, {q'}_{k} - \lambda_{k} \kl({\pi_{k+1}}\|{\pi_{k}})\right\rangle 
\end{array}\right. .
\end{equation}
\end{theorem}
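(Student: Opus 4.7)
The plan is a direct substitution followed by two short algebraic checks, organized as an induction on $k$. Assuming $q'_k = \lambda_k(q_k - \ln \pi_k)$ (tracking indices carefully so that the regularization coefficients match cleanly on both sides), I will verify that both the policy step and the value step of \eqref{eq:GVI2} coincide with those of the KL-regularized scheme on the right, so that $q'_{k+1}$ also satisfies the defining relation and the induction closes.

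For the policy step, both sides admit closed-form greedy solutions. The entropy-regularized argmax in \eqref{eq:GVI2} yields the softmax $\pi_{k+1}\propto\exp(q_k)$. Fenchel conjugacy applied to the KL-regularized argmax on the right-hand side yields $\pi_{k+1}\propto\pi_k\exp(q'_k/\lambda_k)$. Substituting $q'_k = \lambda_k(q_k-\ln\pi_k)$ reduces the latter to $\pi_k\exp(q_k-\ln\pi_k)=\exp(q_k)$, so the two policies coincide.

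For the value step, I subtract $\ln\pi_{k+1}$ from the update in \eqref{eq:GVI2} and multiply through by $\lambda_{k+1}$ to obtain
\begin{equation*}
q'_{k+1} \;=\; r + \lambda_k\,\gamma P\langle \pi_{k+1},\,q_k-\ln\pi_{k+1}\rangle.
\end{equation*}
Replacing $q_k$ by $q'_k/\lambda_k + \ln\pi_k$, distributing the $\lambda_k$ factor, and recognizing the KL term then gives
\begin{equation*}
q'_{k+1} \;=\; r + \gamma P\bigl[\langle\pi_{k+1},q'_k\rangle - \lambda_k\langle\pi_{k+1},\ln\pi_{k+1}-\ln\pi_k\rangle\bigr] \;=\; r + \gamma P\bigl[\langle\pi_{k+1},q'_k\rangle - \lambda_k\,\kl(\pi_{k+1}\|\pi_k)\bigr],
\end{equation*}
which is exactly the target value update; combined with the policy equivalence above, this closes the induction.

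I do not anticipate a conceptual obstacle: the entire argument reduces to direct substitution combined with the softmax and KL-softmax closed forms. The one step that deserves real attention is bookkeeping on the $\lambda$ indices, since that is precisely what allows the awkward ratio $\lambda_k/\lambda_{k+1}$ and the factor $1/\lambda_{k+1}$ multiplying $r$ in \eqref{eq:GVI2} to cancel and resurface as the single coefficient $\lambda_k$ appearing on the right-hand scheme.
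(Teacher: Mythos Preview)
Your proof is correct and follows essentially the same route as the paper's: both verify the greedy step via the softmax/KL-softmax closed forms and the evaluation step by subtracting $\ln\pi_{k+1}$, scaling, and regrouping into the KL term. Your choice $q'_k=\lambda_k(q_k-\ln\pi_k)$ is in fact the indexing the paper's own computations use (the $\lambda_{k+1}$ in the statement and the first line of the appendix is a typo), so your careful bookkeeping lands exactly on the intended argument.
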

\begin{proof}
    See Appendix~\ref{apdx:GVI2} for the proof.
\end{proof}

By dynamically adjusting the KL coefficient, Eq.~\eqref{eq:GVI2} mitigates issues brought by various sources of error and improves learning stability.
One more problem remains for making Eq.~\eqref{eq:GVI2} practically applicable. 
In tuning the KL coefficient Eq.~\eqref{eq:kl-dynamic}, the magnitude information of $\|\epsilon_{k+1}\|_\infty$ is typically unavailable.
Taking inspiration from a very recent work \citep{vieillard2020deep}, we approximately compute this error by moving average TD-error from batches.
Hence, we approximate $\|\epsilon_{k+1}\|_\infty$ by the maximum absolute TD error $\|\epsilon_{\text{TD}, i}\|_\infty$, where $i$ indicates the $i$th batch.
In summary, our Geometric Value Iteration (GVI) iterates as follows:
\begin{equation}\label{eq:GVI}
\text{GVI}\;
\left\{\begin{array}{l}
\lambda_{k+1} = \max(\alpha_1\|\epsilon_\text{TD, k}\|_\infty, \alpha_2\lambda_{k}) \\
\pi_{k+1}=\argmax_{\pi\in\Delta^\s_\A}\left\langle\pi, q_{k}\right\rangle + \hc(\pi)\\
q_{k+1}=\ln \pi_{k+1} + \frac{r}{\lambda_{k+1}} + \frac{\lambda_k}{\lambda_{k+1}} \gamma P\left\langle\pi_{k+1}, q_{k}-\ln \pi_{k+1} \right\rangle
\end{array}\right. .
\end{equation}
The name {\it Geometric} comes from the fact that GVI mixes two policies by weighted {\it geometric} mean as $\pi_{k+1} = \gc_{\pi_k}^{\lambda_k}(q_k) \propto (\pi_k)^{1-\zeta_k}\left(\gc_{\pi_k}^{\lambda}(q_k)\right)^{\zeta_k}$, where $\lambda / \zeta_k = \lambda_k$.

We now present the implementation of Eq.~\eqref{eq:GVI} using deep networks, or Deep GVI (DGVI). 
Suppose $Q$-values are estimated by an online $Q$ network parameterized by weight vector $\theta$ and the transition data are stored in a FIFO replay buffer $\mathcal{B}$.
DGVI minimizes the following loss function:
\begin{align}\label{eq:qnet}
    L_{\theta} &= \E_{(s, a, r, s')\sim\mathcal{B}}\left[\left(q_{\theta}(s,a)-y(s, a)\right)^2\right],\\
    \text{ where }\; y(s, a)&=\ln \pi (a|s) + \frac{r(s,a)}{\lambda'} + \frac{\lambda}{\lambda'}\gamma\E_{a' \sim \pi(\cdot|s')} \left[q_{\bar{\theta}}(s', a') - \ln{\pi(a'|s')}\right],
\end{align}
where $\bar{\theta}$ indicates the weight vector of the target network and $\pi\propto \exp(q_{\bar{\theta}})$ is the greedy policy.
$\lambda$ and $\lambda'$ are the previous and current KL coefficients, respectively.
The use of slowly updated target network $q_{\bar{\theta}}$ in the target $y(s, a)$ is conventional for stability purposes. 
The parameters $\bar{\theta}$ are either infrequently copied from $\theta$ or obtained by Polyak averaging $\bar{\theta}$.
While target networks could be used to further enhance the performance, in our experiments we explicitly remove target networks to highlight the error-robustness of DGVI.

Taking inspiration from \citep{vieillard2020deep}, we use the moving average of maximum batch TD errors to approximate the maximum error based on Eq.~\eqref{eq:GVI2}:

\begin{equation}\label{eq:prac-lambda}
\begin{aligned}
   \lambda' &\leftarrow (1 - \nu) \lambda' + \nu \max(\alpha_1 \|\epsilon_\text{TD}\|_\infty, \alpha_2\lambda)\\
   \lambda &\leftarrow (1 - \nu_\text{slow}) \lambda + \nu_\text{slow} \lambda'
\end{aligned},
\end{equation}
where $\|\epsilon_\text{TD}\|_\infty$ is the maximum absolute TD error in a batch, and $\nu$ and $\nu_\text{slow}$ are learning rates for $\lambda$ and $\lambda'$, respectively.
We summarize the algorithm of DGVI in Algorithm.~\ref{alg:dGVI}.

\begin{algorithm}[tb]
\begin{algorithmic}[1]
\caption{Deep Geometric Value Iteration} \label{alg:dGVI}
\STATE Initialize $\theta$, $\lambda$ and $\lambda'$
\FOR{each iteration}
    \STATE Collect transitions and add them to $\mathcal{B}$
	\FOR{each gradient step}
	    \STATE Compute the maximum absolute TD error $\|\epsilon_\text{TD}\|_\infty$ in a minibatch.
	    \STATE Update $\lambda$ and $\lambda'$ using Eq.~\eqref{eq:prac-lambda}.
    	\STATE Update $\theta$ with one step of SGD using Eq.~\eqref{eq:qnet}
	\ENDFOR
\ENDFOR
\end{algorithmic}
\end{algorithm}

\section{Experiments}\label{sec:experiments}

This section empirically studies the proposed GVI with tabular and deep implementation. 
We wanted to evaluate the effectiveness of our error-aware KL coefficient design in handling the trade-off between learning speed and stability.
For didactic purposes, we first evaluated GVI on a tabular maze environment that is the same as the one used in Figure~\ref{fig:md-mpi-maze}.
The tabular experiments serve to verify that GVI can better handle the trade-off problem between learning speed and robustness than the constant KL coefficient scheme.
We then conducted an experiment on classic control tasks from OpenAI Gym benchmarks~\citep{brockman2016openai} to observe the behavior of GVI with deep implementation.
For the deep RL experimentation, we consider GVI as a variation of Munchausen-DQN (M-DQN)~\citep{vieillard2020munchausen} and thus take M-DQN as our baseline.

\paragraph{Tabular Experiments}
Figure~\ref{fig:maze_comparison} investigates the optimality gap of GVI with varying conditions.
For GVI, we also included the investigation of the introduced hyperparameters $\alpha_1$ and $\alpha_2$ and their impact on performance.
Although they do not play any role in error analysis, in practice they can have a large effect on the trade-off between speed and stability.

The left graph in Figure~\ref{fig:maze_comparison} compares the best behavior of GVI with MD-VI, where the parameters of GVI are fine-tuned to yield the empirically best performance.
The figure shows that GVI achieves faster and more robust convergence than MD-VI under a certain hyperparameter. 
GVI reaches the minimum optimality gap in around $50$ steps and keeps the value under $10^{-3}$.
On the other hand, MD-VI suffers from the trade-off between speed and stability.
While $\lambda=50$ reaches the minimum optimality gap close to that of GVI, it reaches it in around $1000$ steps and is thus much slower than GVI.
MD-VI with $\lambda=30$ converges faster, but the optimality gap oscillates and exceeds $10^{-2}$.
Therefore, it can be safely concluded that the constant KL coefficient scheme MD-VI cannot outperform GVI.
 
The middle and the right graphs are plotted to investigate the behavior of GVI with different $\alpha_1$ and $\alpha_2$.
GVI with a small $\alpha_1$ never reaches the optimal value, while experiments with the small $\alpha_2$ obtain a small optimality gap at the cost of huge oscillation. 
These are expected since $\lambda_k$ corresponds to the learning rate of the updates~\citep{kozunoCVI}, and $\alpha_2$ decides how long the conservativeness remains after detecting large errors. 

\begin{figure}[t]
    \centering
    \begin{minipage}[c]{0.32\linewidth}
        \includegraphics[width=\linewidth]{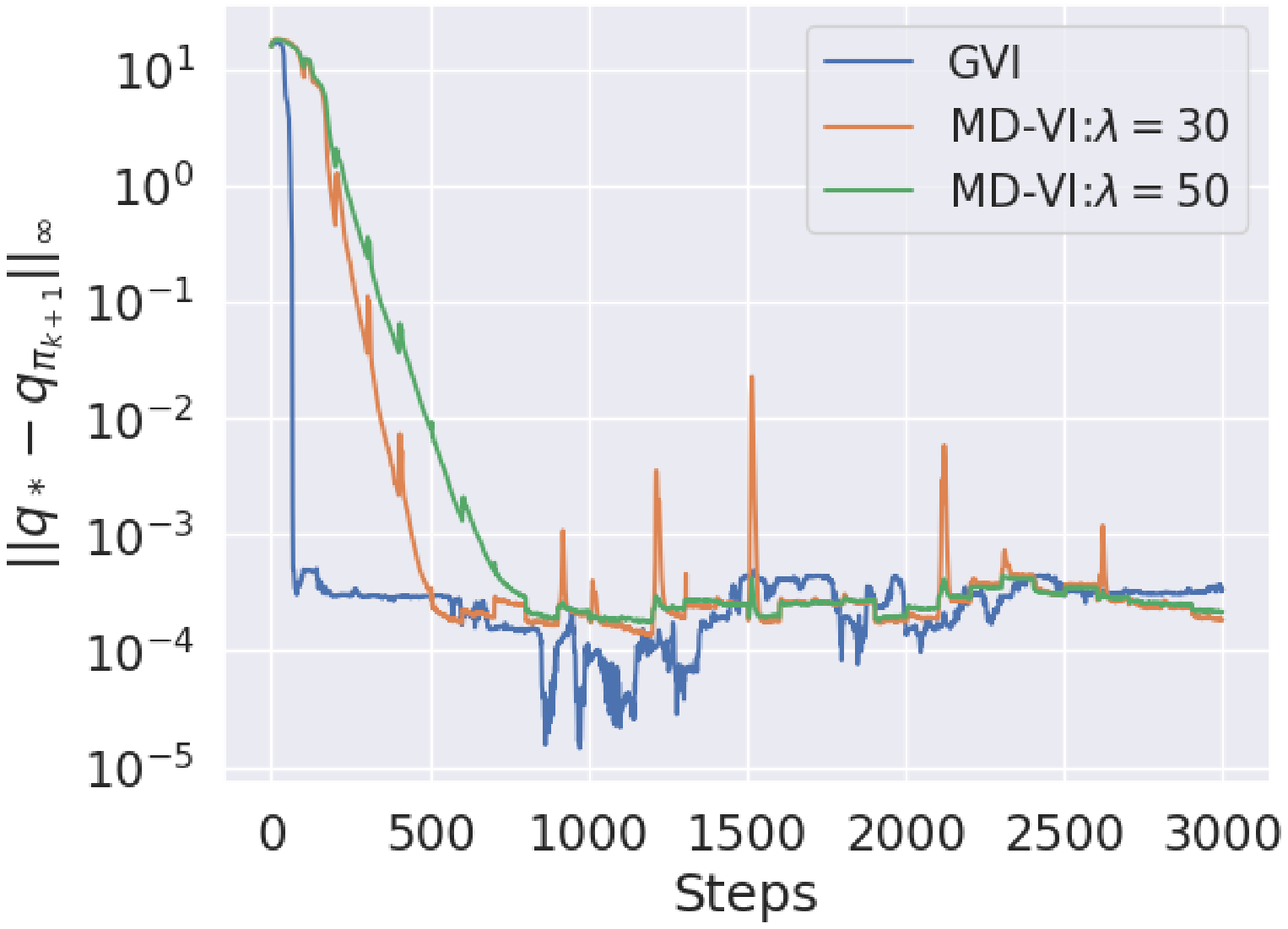}
    \end{minipage}
    \begin{minipage}[c]{0.32\linewidth}
        \includegraphics[width=\linewidth]{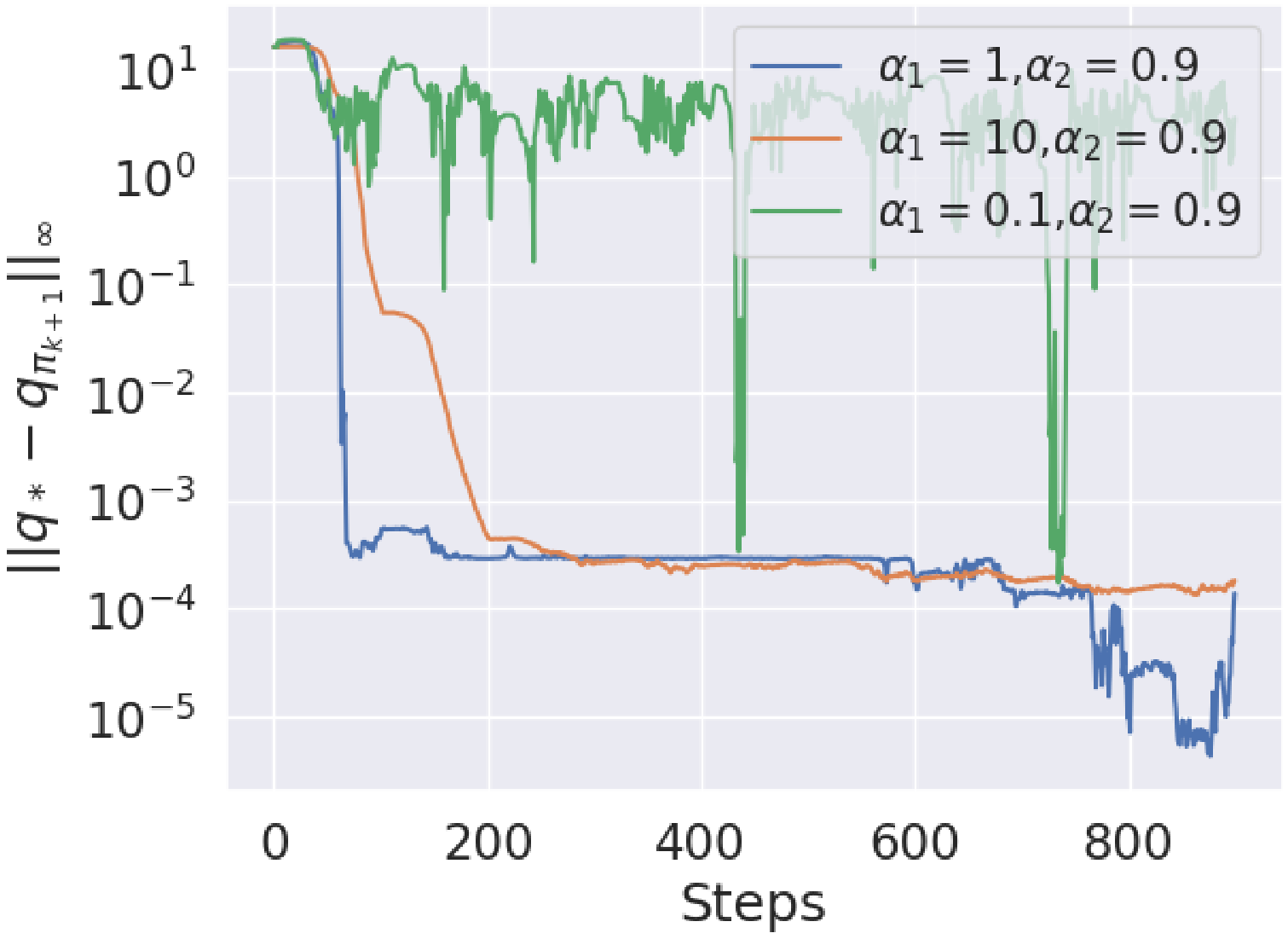}
    \end{minipage}
    \begin{minipage}[c]{0.32\linewidth}
        \includegraphics[width=\linewidth]{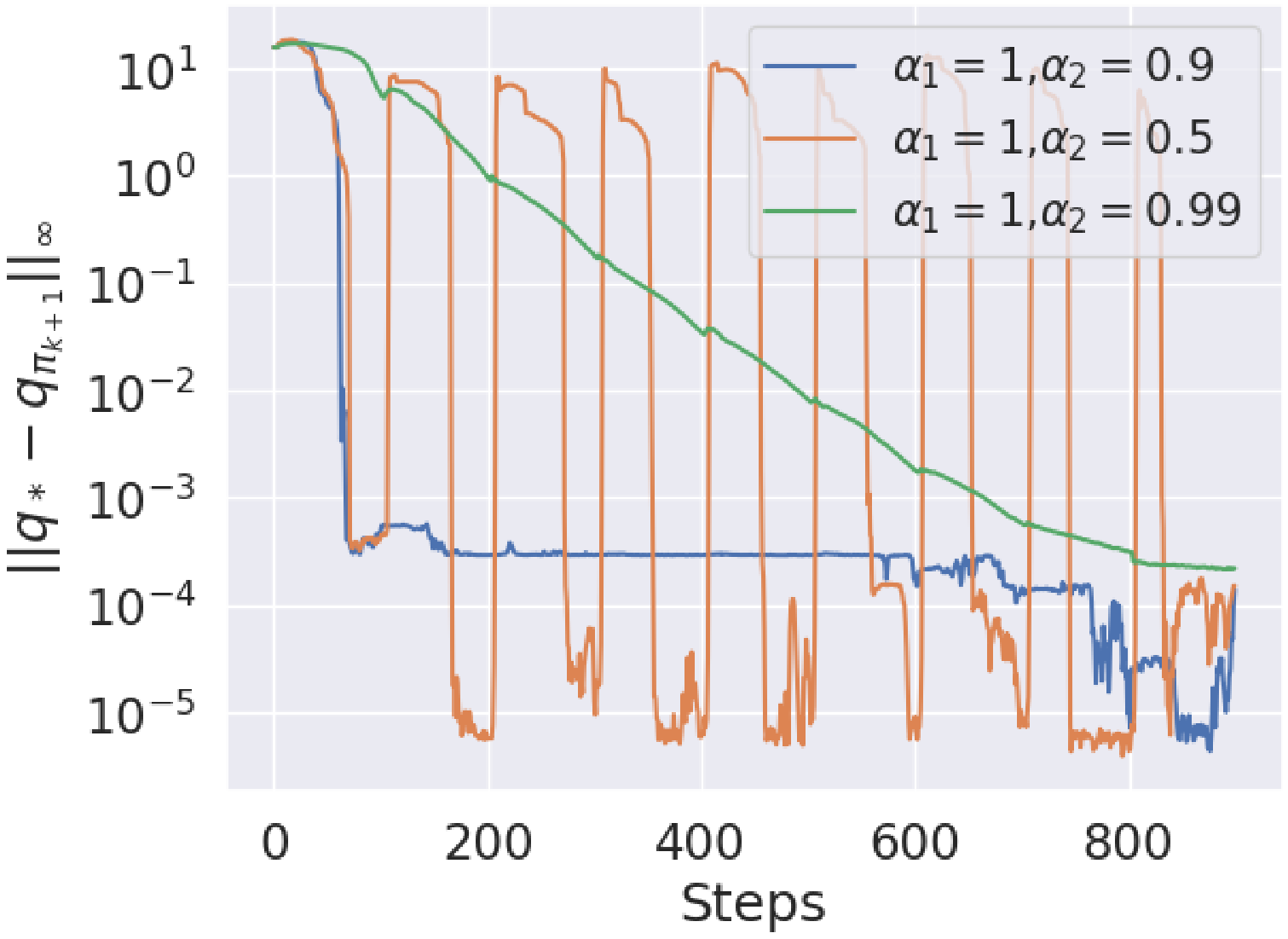}
    \end{minipage}
    \caption{(Left) Performance comparison of GVI ($\alpha_1=2, \alpha_2=0.9$) and MD-PI($\lambda=30$, $\lambda=50$). (Middle) GVI with different $\alpha_1$. (Right) GVI with different $\alpha_2$.}
    \label{fig:maze_comparison}
\end{figure}

\paragraph{Deep RL Experiments}

Using a set of classic control benchmarks~\citep{brockman2016openai}, we examine the DGVI of Algorithm.~\ref{alg:dGVI} against the constant KL coefficient algorithm of M-DQN~\citep{vieillard2020munchausen}.
We choose the LunarLander-v2, CartPole-v1, and Pendulum-v0 environments as our benchmarks.
Since our DGVI and M-DQN support only discrete action space environments, we discretized the continuous action space of Pendulum-v0 into five discrete actions.
For each seed we perform 10 evaluation rollouts every 300 environment steps.
For a fair comparison, all of the algorithms share the same hyperparameters except the KL regularization.
To highlight the robustness of algorithms against estimation errors, we explicitly remove target networks from the algorithms, even though such networks provide a key ingredient to the success of modern deep RL \citep{mnih2015human,pmlr-v80-haarnoja18b}.
All figures are plotted by averaging results from five independent random seeds for statistical results.
We list the set of hyperparameters in Appendix~\ref{apdx:hypers}.

Figure~\ref{fig:return_comparison} shows the learning curves of algorithms and the corresponding KL regularization of DGVI.
Compared to the constant regularized algorithms, GVI achieves more stable learning in DiscretePendulum and CartPole.
Notably, GVI has smaller regularization in DiscretePendulum and CartPole than $\lambda=10$.
This indicates that the dynamic change of the KL coefficient is more important than its magnitude.

To observe how the dynamic KL coefficient improves stability, we evaluated the maximum absolute TD error $\|\epsilon_\text{TD}\|_\infty$ as shown in Figure~\ref{fig:td}.
Compared to constant regularized algorithms, the error of DGVI proves to be much smaller during learning.
This result agrees well with how DGVI updates the network by Eq.~\eqref{eq:qnet}: the bootstrap is scaled by $\frac{\lambda}{\lambda'}$, which becomes small when DGVI encounters large errors.
For a better understanding of the effect on the bootstrap, consider an extreme case where a significantly huge error is induced and $\lambda'$ is infinite.
Then, the loss becomes $L_{\theta} \approx \E_{(s, a)\sim\mathcal{B}}\left[\left(q_{\theta}(s,a)-\ln \pi(s, a)\right)^2\right]=\E_{s\sim\mathcal{B}}\left[\left(\ln \sum_{a\in \A} \exp\left(q_\theta (s, a)\right)\right)^2\right]$, and thus the new $q_\theta$ will have smaller values.
GVI thus tends to underestimate the state and action pairs where huge errors are expected, which is assumed to prevent bad updates from quickly spreading to downstream $Q$-values.
We can conclude that the proposed mechanism renders DGVI stable even without target networks.

\begin{figure}[t]
    \centering
    \begin{minipage}[c]{0.32\linewidth}
        \includegraphics[width=\linewidth]{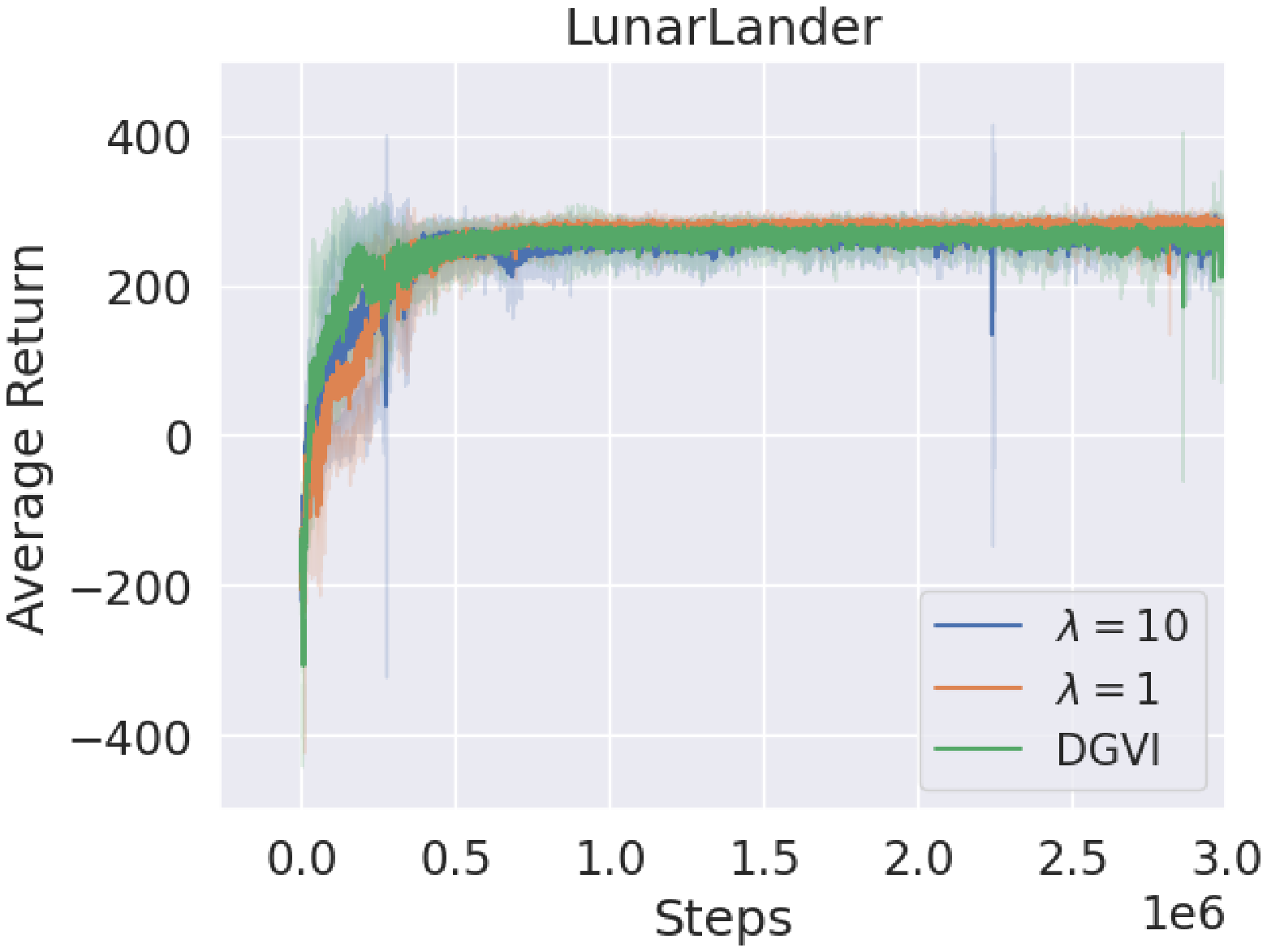}
    \end{minipage}
    \begin{minipage}[c]{0.32\linewidth}
        \includegraphics[width=\linewidth]{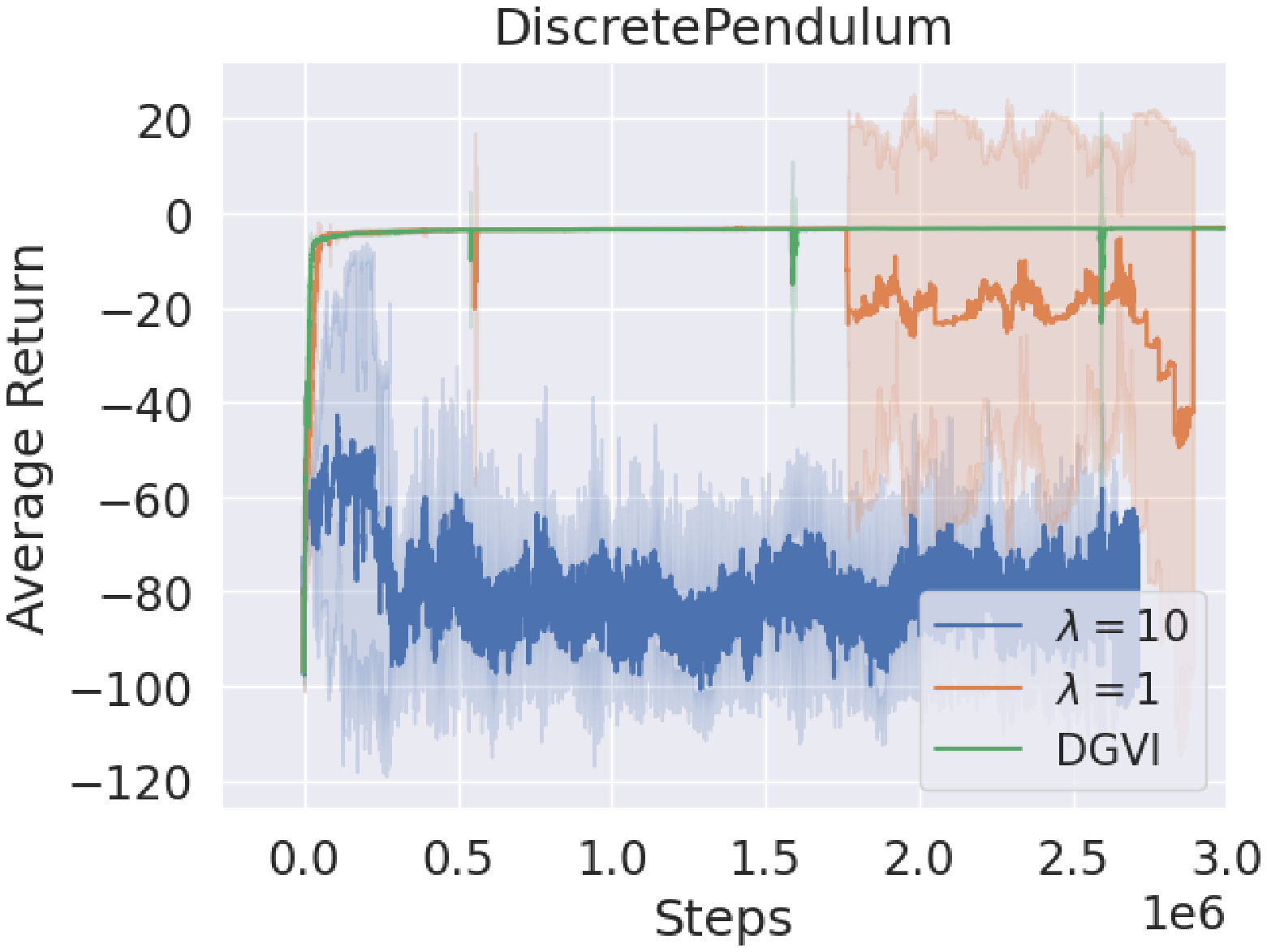}
    \end{minipage}
    \begin{minipage}[c]{0.32\linewidth}
        \includegraphics[width=\linewidth]{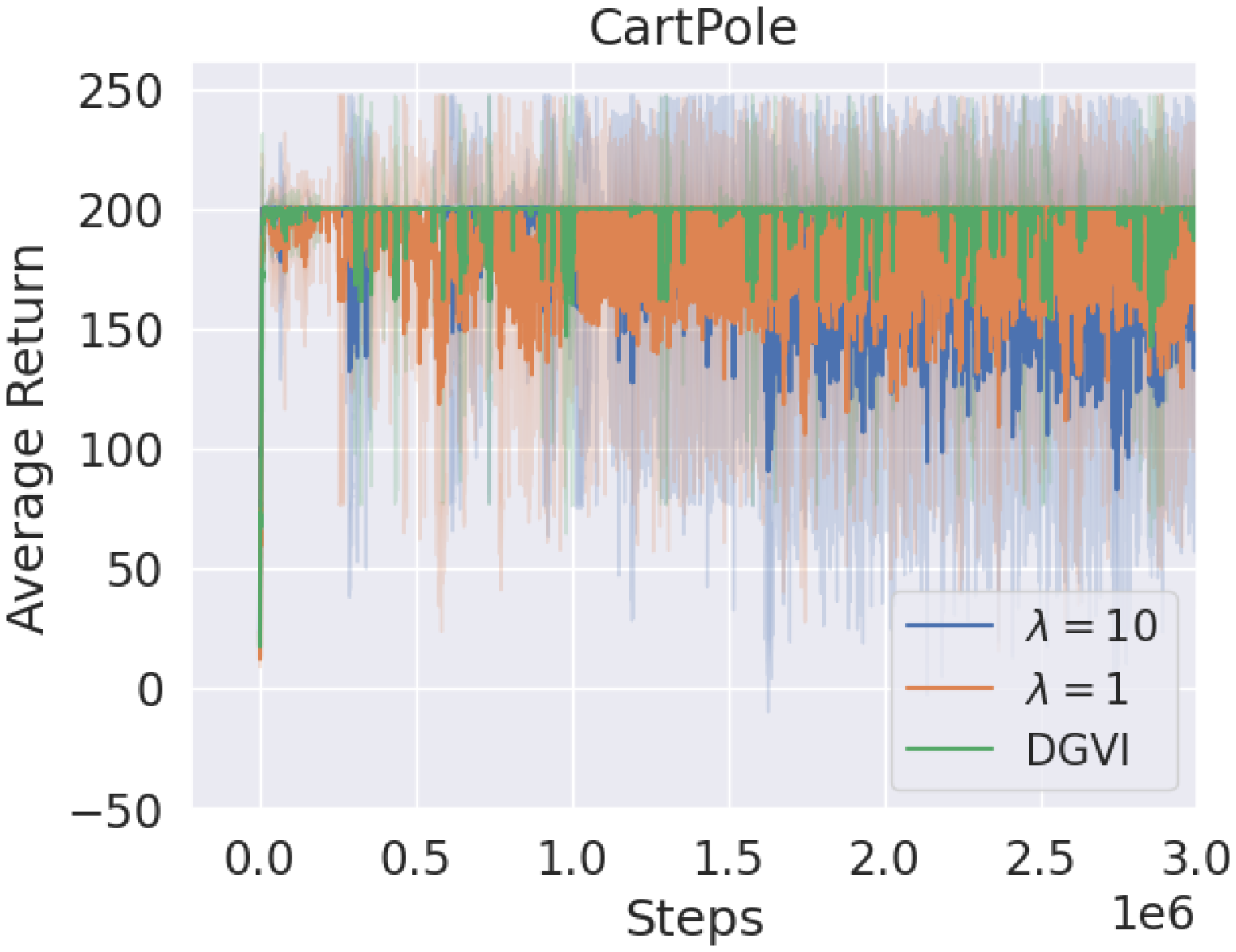}
    \end{minipage}
    \begin{minipage}[c]{0.32\linewidth}
        \includegraphics[width=\linewidth]{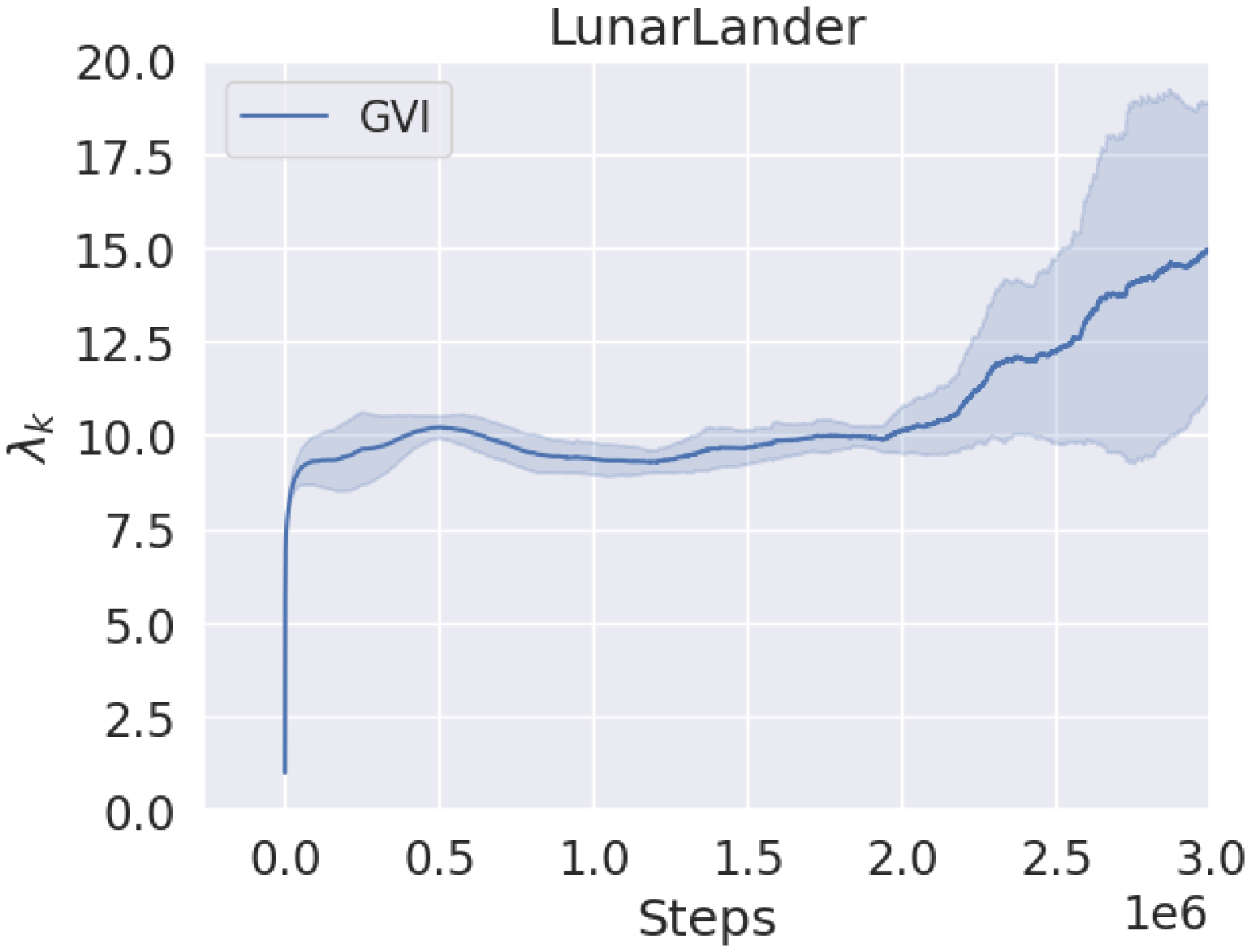}
    \end{minipage}
    \begin{minipage}[c]{0.32\linewidth}
        \includegraphics[width=\linewidth]{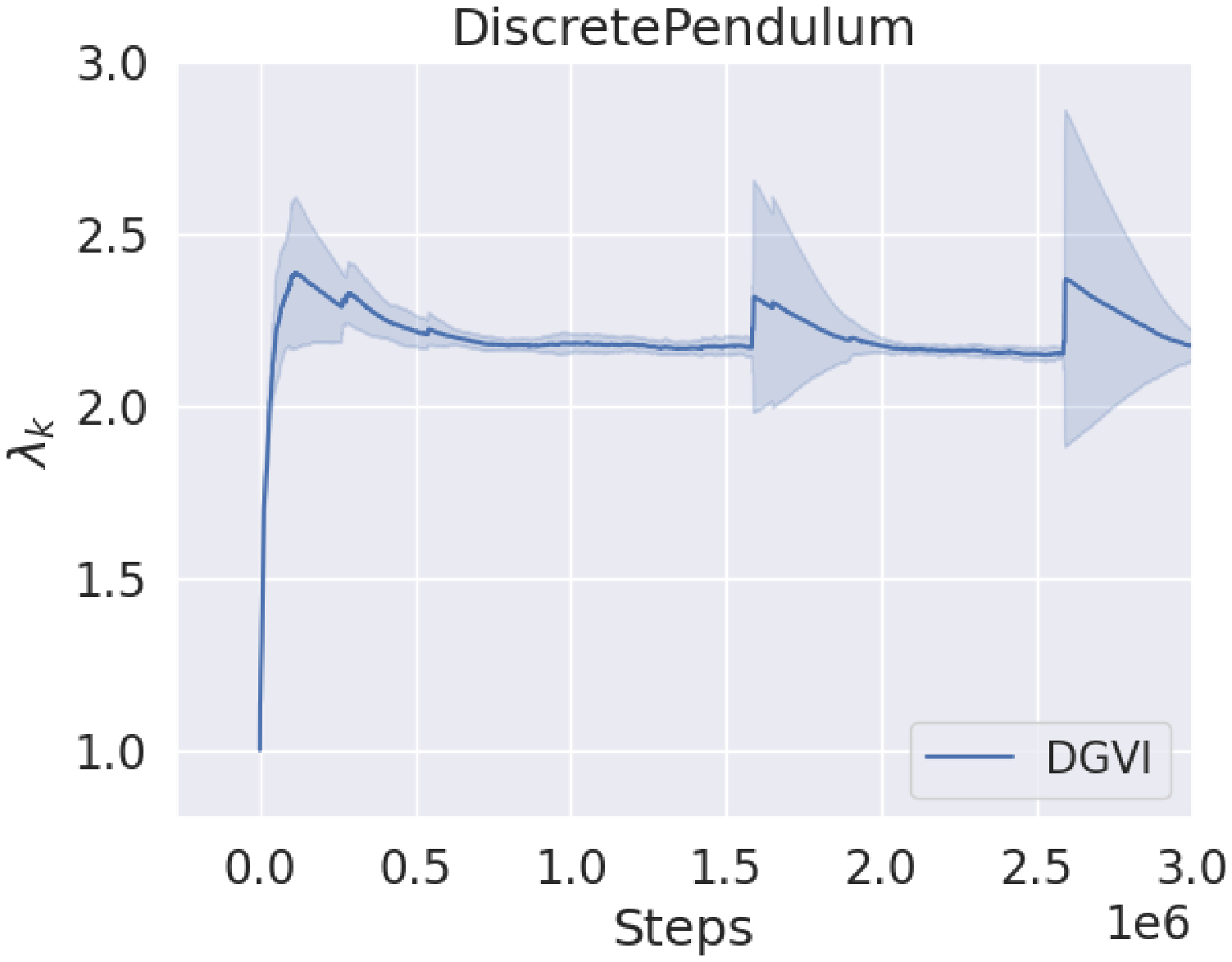}
    \end{minipage}
    \begin{minipage}[c]{0.32\linewidth}
        \includegraphics[width=\linewidth]{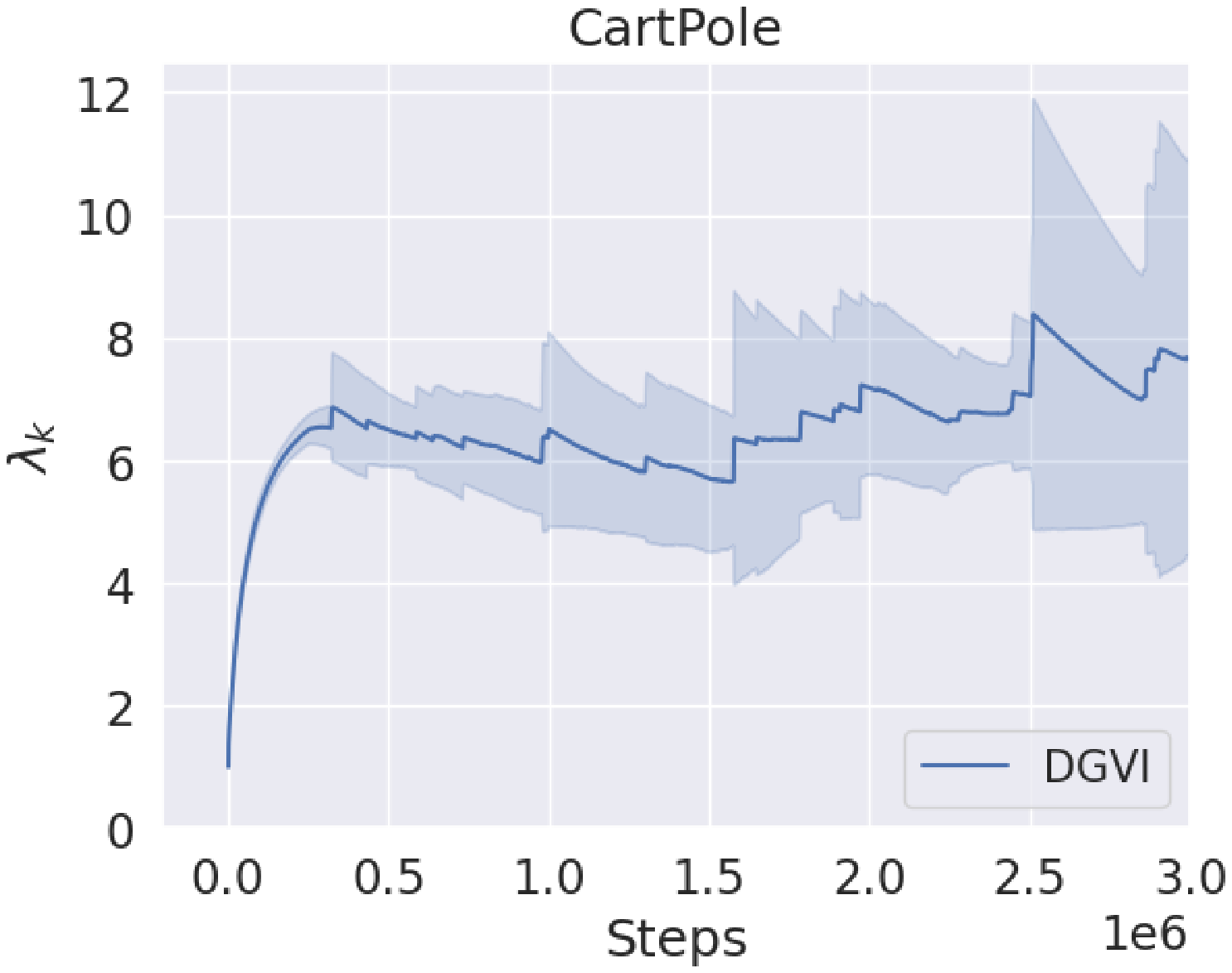}
    \end{minipage}
    \caption{(Top) Training curves on the discrete control tasks and (bottom) the KL coefficient in DGVI. The solid curves show the mean and the shaded regions show the standard deviation over the five independent trials.}
    \label{fig:return_comparison}
\end{figure}

\begin{figure}[t]
    \centering
    \begin{minipage}[c]{0.32\linewidth}
        \includegraphics[width=\linewidth]{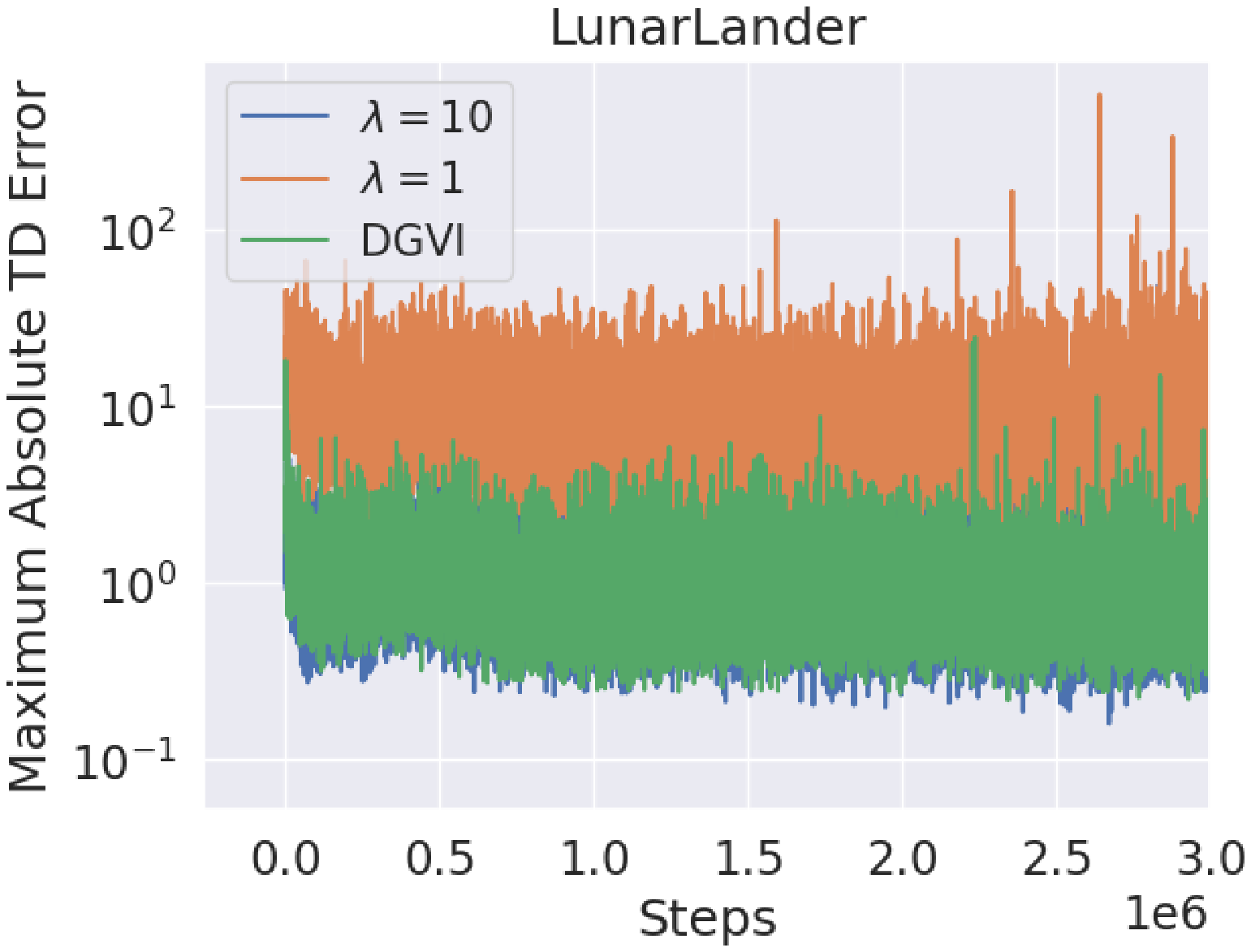}
    \end{minipage}
    \begin{minipage}[c]{0.32\linewidth}
        \includegraphics[width=\linewidth]{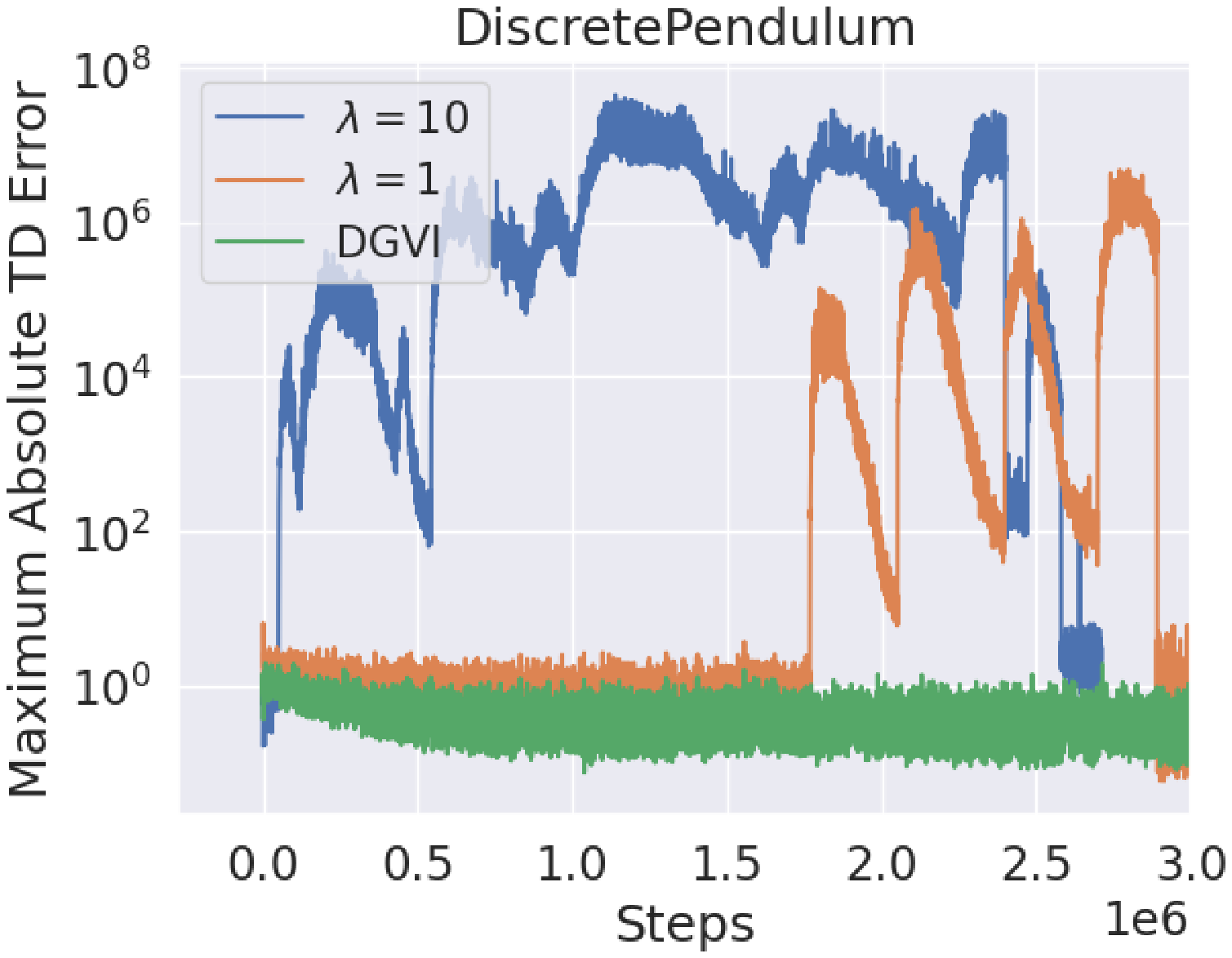}
    \end{minipage}
    \begin{minipage}[c]{0.32\linewidth}
        \includegraphics[width=\linewidth]{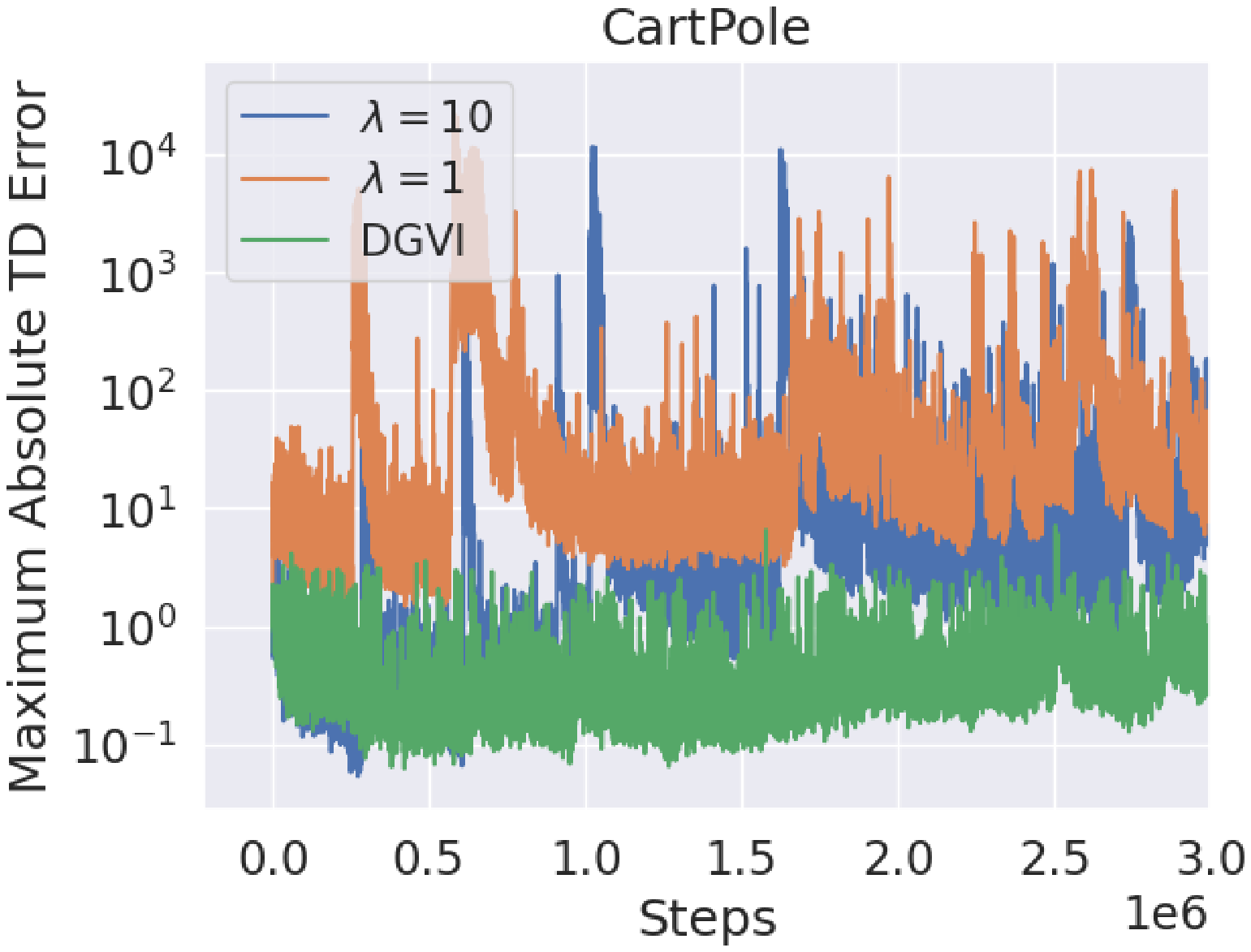}
    \end{minipage}
    \caption{The maximum absolute TD errors. The results are averaged over the five independent trials. DGVI shows much smaller TD errors in DiscretePendulum and CartPole.}
    \label{fig:td}
\end{figure}

\section{Related Work and Discussion}\label{sec:related_work}

The recent boom in the literature on KL-regularized ADP~\citep{azar2012dynamic,ghavamzadeh2011speedy,bellemare2016increasing,vieillard2020momentum,kozunoCVI} has demonstrated the effectiveness of KL regularization against estimation errors.
The most relevant algorithms to our proposed approach are Mirror Descent Value Iteration (MD-VI)~\citep{geist19-regularized} and Munchausen Value Iteration (MVI)~\citep{vieillard2020munchausen}; those algorithms introduce a KL penalty on both the greedy and the evaluation steps.
While some studies have focused on the error bounds of KL-regularized AVI~\citep{vieillard2020leverage}, how dynamic changes in the regularization coefficient affect performance has been left largely untouched.
To the best of our knowledge, this paper is the first work to provide the error bound of an AVI with dynamic KL regularization.

While it has not been discussed in the ADP literature, dynamic KL regularization has appeared in many deep RL algorithms.
Dynamic KL regularization is often introduced to restrict aggressive policy improvement steps.
Trust region policy optimization (TRPO)~\citep{schulman2015trust} is one such seminal algorithm that introduces KL constraints to approximately ensure monotonic improvement.
Based on TRPO, many algorithms leverage the KL constraint and demonstrate promising performance on challenging environments~\citep{schulman1707proximal,nachum2017trust,abdolmaleki2018maximum}, and \citet{nachum2017trust} introduced a dynamic KL coefficient design to create a trust region.
However, the above-mentioned algorithms design the dynamic coefficient based on heuristics, while we design it by leveraging rigorous analysis as shown in Theorem~\ref{thm:GVI}.
Furthermore, trust-region methods consider the KL constraints even when there are no estimation errors, and thus they may overly slow down learning.

In addition to the dynamic KL regularization, DGVI has an important feature: error awareness.
One of the most well-known algorithms making use of TD error is Prioritized Experience Replay (PER)~\citep{schaul2015prioritized}.
PER utilizes TD error for prioritizing the samples in the replay buffer to increase the appearance of rare samples.
On the other hand, DGVI mitigates the effect of rare samples that may have huge TD errors by scaling its bootstrapping.
Thus, slower learning will be expected when exploration matters: the rare samples will have less of an affect than usual in DGVI.
We do not consider this problem as exploration that is out of our scope.

In this work, we do not consider Shannon entropy for regularization.
Some entropy regularized ADP literature has established that by augmenting the reward with Shannon entropy, the optimal policy becomes multi-modal and hence robust against adversarial settings~\citep{haarnoja2017reinforcement,pmlr-v80-haarnoja18b,ahmed2019understanding}.
We leave GVI with Shannon entropy regularization as future work due to the complex theoretical analysis.

\section{Conclusion}\label{sec:conclusion}

We have presented the first {\it error-aware} KL coefficient design for RL algorithms and developed a novel {\it error-aware} RL algorithm, Geometric Value Iteration (GVI), which features a dynamic \emph{error-aware} KL coefficient design aimed at mitigating the impact of errors on performance.
The theoretical error bound analysis provides two guidelines for efficient learning: The coefficient should be increased when a large error is induced but its effect should not be overly large.
This dynamic regularization allows GVI to address the trade-off problem between robustness and convergence speed, which has been largely left untouched in previous ADP studies.

In addition to GVI as an ADP scheme, we further combined GVI with deep networks.
Based on the recent framework introduced by \citet{vieillard2020munchausen}, we implement GVI as a deep RL algorithm, and the resulting algorithm, deep GVI (DGVI), achieves robustness against errors by reducing the bootstrapping effect when it meets huge TD errors.
Our experiments verified not only the faster and more stable learning of GVI but also the more robust learning of DGVI even without target networks.

While our algorithm can be easily applied to standard deep RL frameworks, our empirical studies are limited to classic control tasks due to the expensive computational cost of recent Deep RL benchmarks, e.g., Atari games~\citep{bellemare2013arcade}.
We believe that the classic control tasks are sufficient to verify our algorithms and thus leave evaluation on a set of high-dimensional benchmarks as future work.

\acks{
This work is partly supported by JSPS KAKENHI Grant Number 21H03522 and 21J15633.
}

\bibliography{acml21}

\clearpage

\appendix

\section{Proofs on the performance bound}\label{apdx:gpi-proof}
For the following proof, we define the greedy policy and the Bellman operator regularized by Shannon entropy as well as KL divergence as $\gc_\mu^{\lambda, \tau}(q) = \argmax_{\pi\in\Delta^\s_\A} \left(\langle \pi, q\rangle - \lambda \kl(\pi||\mu) + \tau \hc(\pi) \right)$ and $T_{\pi\sep\mu}^{\lambda,\tau} q = r + \gamma P \left(\langle \pi, q\rangle - \lambda \kl(\pi||\mu)\right) + \tau \hc(\pi)$, respectively.
We also note the following fact about the greedy policy~\citep{vieillard2020leverage}:
\begin{equation}\label{eq:analytical}
\gc_\mu^{\lambda, \tau}(q) = \argmax_{\pi\in\Delta^\s_\A} \left(\langle \pi, q\rangle - \lambda \kl(\pi||\mu) + \tau \hc(\pi) \right) \propto \mu^{\frac{\lambda}{\lambda + \tau}} \exp \frac{1}{\lambda + \tau}q , 
\end{equation}
and we have the following maximum:
\begin{equation}\label{eq:maximum}
\max_{\pi\in\Delta^\s_\A} \left(\langle \pi, q\rangle - \lambda \kl(\pi||\mu) + \tau \hc(\pi) \right) = (\lambda + \tau) \ln\langle \un, \mu^{\frac{\lambda}{\lambda + \tau}}\exp \frac{q}{\lambda + \tau}\rangle .
\end{equation}
Before going to the proof of Theorem~\ref{thm:GVI}, we provide the following proposition.

\begin{proposition}
    \label{thm:equivalence}
    Define $Z_k=\sum^{k}_{j=0}\eta_j$, $h_0=q_0$, and $h_k$ for $k\geq 1$ as the average of past smoothed $q$-functions: $h_{k} = \frac{1}{Z_k}\sum^k_{j=0} \eta_j q_j = \frac{Z_{k-1}}{Z_{k}} h_{k-1} + \frac{\eta_k}{Z_{k}} q_{k}$.
    If $\lambda_k>0$ for all $k$, GVI is equivalent to the following iteration:
\begin{equation}
    \begin{cases}
        \pi_{k+1} = \gc^{0,{\frac{1}{Z_k}}}(h_k) \\
        q_{k+1} = (T_{\pi_{k+1}\sep\pi_k}^{{\frac{1}{\eta_k}},0})^m q_k + \epsilon_{k+1} \\
        h_{k+1} = \frac{1}{Z_{k+1}}\sum^{k+1}_{j=0} \eta_j q_j = {\frac{Z_{k}}{Z_{k+1}}} h_k + {\frac{\eta_{k+1}}{Z_{k+1}}} q_{k+1}.
    \end{cases}.\label{eq:da-gpi}
\end{equation}
\end{proposition}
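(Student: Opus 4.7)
The plan is to prove the equivalence by induction on $k$, exploiting the fact that the $q$-updates in the two iterations are syntactically identical. Because $1/\eta_k=\lambda_k$, the operator $T^{1/\eta_k,0}_{\pi_{k+1}\sep\pi_k}$ in Eq.~\eqref{eq:da-gpi} coincides with $T^{\lambda_k}_{\pi_{k+1}\sep\pi_k}$ in Eq.~\eqref{eq:dkl-mdmpi}, and the error terms $\epsilon_{k+1}$ are the same. So if one can show that, at every step, the two greedy rules produce the same policy, then a straightforward simultaneous induction on $(\pi_k,q_k)$ forces the two iterations to generate identical sequences. Hence the proof reduces to establishing
\[
\gc_{\pi_k}^{\lambda_k}(q_k) \;=\; \gc^{0,\,1/Z_k}(h_k) \qquad \text{for every } k\geq 0.
\]

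To check this, I would invoke the closed form of the regularized greedy policy from Eq.~\eqref{eq:analytical}, which gives on one hand $\gc_{\pi_k}^{\lambda_k}(q_k)\propto \pi_k\exp(q_k/\lambda_k)$ and on the other $\gc^{0,1/Z_k}(h_k)\propto \exp(Z_k h_k)$. The task is then to verify, state-wise on $\Delta_{\A}^{\s}$, the proportionality $\pi_k\exp(q_k/\lambda_k)\propto\exp(Z_k h_k)$. For the base case $k=0$, the standing assumption that $\pi_0$ is uniform together with $h_0=q_0$ and $Z_0=\eta_0$ yields
\[
\pi_0\exp(q_0/\lambda_0)\;\propto\;\exp(\eta_0 q_0)\;=\;\exp(Z_0 h_0).
\]
For the inductive step, assume $\pi_k\propto\exp(Z_{k-1}h_{k-1})$. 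Using the recursion $Z_k h_k = Z_{k-1}h_{k-1}+\eta_k q_k$ (immediate from $h_k=\tfrac{1}{Z_k}\sum_{j=0}^k \eta_j q_j$) one gets
\[
\exp(Z_k h_k)\;=\;\exp(Z_{k-1}h_{k-1})\exp(\eta_k q_k)\;\propto\;\pi_k\exp(q_k/\lambda_k),
\]
which after normalization on $\A$ yields exactly $\gc_{\pi_k}^{\lambda_k}(q_k)$, closing the induction.

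The only mildly delicate point is tracking the state-dependent normalization hidden inside the symbol $\propto$: each policy must normalize to a probability distribution over $\A$ for every $s\in\s$, and one should confirm that the state-wise softmax normalizations of $\exp(Z_k h_k)$ and of $\pi_k\exp(q_k/\lambda_k)$ produce the same distribution (which they do, because they are softmaxes of the same argument up to a state-dependent multiplicative factor). One also needs $\lambda_k>0$ for all $k$ so that the softmax argument $Z_k h_k$ is finite, which is exactly the hypothesis in the proposition. Beyond these bookkeeping items, the entire argument is a telescoping identity in the exponent, and I do not expect any substantive analytical obstacle.
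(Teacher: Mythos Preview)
Your proposal is correct and follows essentially the same route as the paper: both arguments invoke the closed form of Eq.~\eqref{eq:analytical} to unroll $\pi_{k+1}\propto\pi_k\exp(\eta_k q_k)\propto\cdots\propto\exp\bigl(\sum_{j=0}^k\eta_j q_j\bigr)=\exp(Z_k h_k)$ and then identify this with $\gc^{0,1/Z_k}(h_k)$, while the $q$-update equivalence is immediate from $1/\eta_k=\lambda_k$. You are slightly more explicit about the simultaneous induction and the state-wise normalization bookkeeping, but the substance is identical.
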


\begin{proof}
Using Eq.~\eqref{eq:analytical} and by direct induction, we have $\pi_{k+1} \propto \pi_k \exp \eta_k q_k \propto \dots \propto \exp \sum_{j=0}^k \eta_j q_j= \exp Z_k h_k$.
Eq.~\eqref{eq:analytical} also provides $\argmax_{\pi\in\Delta_\A^\s}\left(\langle\pi, q\rangle + \tau \hc(\pi)\right) \propto \exp(\frac{1}{\tau}q)$. 
Hence, $\pi_{k+1}$ satisfies
$
    \pi_{k+1} = \argmax_{\pi\in\Delta_\A^\s}\left(\langle\pi, h_k\rangle + \frac{1}{Z_k} \hc(\pi)\right) = \gc^{0,\frac{1}{Z_k}}(h_k).
$
\end{proof}
We now prove the error-bound of GVI using Eq.~\eqref{eq:da-gpi}.
\paragraph{Proof.}

We first transform $q_* - q_{\pi_{k+1}}$, the difference between the optimal value function and the value function computed by Eq.~\eqref{eq:da-gpi}, using the following useful lemma:
\begin{lemma}[\citet{kakade-cpi}]
    \label{lemma:value_residual}
    For any $q\in\R^{\s\times\A}$ and $\pi\in\Delta_\A^\s$, we have
    $
        q_\pi - q = (I-\gamma P_\pi)^{-1}(T_\pi q - q).
    $
\end{lemma}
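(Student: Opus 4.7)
The plan is to leverage the fixed-point characterization $T_\pi q_\pi = q_\pi$ and the affine structure of the Bellman evaluation operator $T_\pi q = r + \gamma P_\pi q$. Starting from $q_\pi - q$, I would substitute $q_\pi = T_\pi q_\pi$ and add and subtract $T_\pi q$ to get the decomposition
\begin{equation*}
q_\pi - q = (T_\pi q_\pi - T_\pi q) + (T_\pi q - q).
\end{equation*}
Because $T_\pi$ is affine with linear part $\gamma P_\pi$, the first parenthesis collapses to $\gamma P_\pi (q_\pi - q)$, yielding
\begin{equation*}
q_\pi - q = \gamma P_\pi(q_\pi - q) + (T_\pi q - q),
\end{equation*}
which rearranges into $(I - \gamma P_\pi)(q_\pi - q) = T_\pi q - q$.

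The second step is to justify inverting $I - \gamma P_\pi$. Since $P_\pi$ is a stochastic operator acting on $\R^{\s\times\A}$, its induced $\ell_\infty$-norm is $1$, so $\|\gamma P_\pi\|_\infty \leq \gamma < 1$. This makes the Neumann series $\sum_{t=0}^{\infty}(\gamma P_\pi)^t$ absolutely convergent and equal to $(I-\gamma P_\pi)^{-1}$. Applying this inverse to both sides of the previous display delivers the claimed identity $q_\pi - q = (I - \gamma P_\pi)^{-1}(T_\pi q - q)$.

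There is no genuine obstacle here: the identity is essentially a restatement of the contraction-plus-fixed-point structure of $T_\pi$. The only subtlety worth spelling out is that the argument works verbatim for any $q\in\R^{\s\times\A}$, not just those near $q_\pi$, because the invertibility of $I - \gamma P_\pi$ is a property of the MDP alone and does not depend on $q$. No explicit computation beyond the three-line chain above is required, and the proof uses nothing more than the definition of $T_\pi$, the defining property $q_\pi = T_\pi q_\pi$, and the discount-factor contraction bound $\gamma<1$.
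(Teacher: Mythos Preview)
Your argument is correct and is the standard derivation of this identity. The paper itself does not supply a proof of this lemma---it is quoted with attribution to \citet{kakade-cpi}---so there is nothing to compare against; your three-line computation (fixed point, affine decomposition, Neumann-series invertibility of $I-\gamma P_\pi$) is exactly what any proof of this fact looks like.
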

Using Lemma~\ref{lemma:value_residual}, $q_* - q_{\pi_{k+1}}$ can be transformed as
\begin{align}
    q_* - q_{\pi_{k+1}} &= q_* - h_k + h_k - q_{\pi_{k+1}} \nonumber
    \\
    &= (I-\gamma P_{\pi_*})^{-1}(T_{\pi_*} h_k - h_k) - (I-\gamma P_{\pi_{k+1}})^{-1}(T_{\pi_{k+1}}h_k - h_k). \label{eq:appx:davi10:decomposition}
\end{align}
Since the KL regularization vanishes after the iteration converges, the optimal policy must be deterministic, and hence $\hc(\pi_*)=0$.
Since $\pi_{k+1}$ is the regularized greedy policy, we have
\begin{align}
    \pi_{k+1} = \gc^{0,\frac{1}{Z_k}}(h_k) \nonumber
    &\Rightarrow
    \langle \pi_{k+1}, h_k \rangle + \frac{1}{Z_k} \hc(\pi_{k+1}) \geq \langle \pi_*, h_k\rangle + \frac{1}{Z_k}\hc(\pi_*) \nonumber
    \\
    &\Rightarrow
    T_{\pi_{k+1}}^{0,\frac{1}{Z_k}} h_k = T_{\pi_{k+1}} h_k + \gamma \frac{1}{Z_k} P\hc(\pi_{k+1}) \geq T_{\pi_*} h_k.
\end{align}
Using this with Eq.~\eqref{lemma:value_residual} and the fact that for any $\pi$ the matrix $(I-\gamma P_\pi)^{-1} = \sum_{t\geq 0} \gamma^t P_\pi^t$ is positive, we have the following inequality:
\begin{equation}
    q_* - q_{\pi_{k+1}} \leq (I-\gamma P_{\pi_*})^{-1}(T_{\pi_{k+1}}^{0,\frac{1}{Z_k}} h_k - h_k)
    - (I-\gamma P_{\pi_{k+1}})^{-1}(T_{\pi_{k+1}}^{0,\frac{1}{Z_k}}h_k - h_k - \gamma \frac{1}{Z_k} P\hc(\pi_{k+1})).
    \label{eq:appx:davi10:decomposition2}
\end{equation}
As for the residual $T_{\pi_{k+1}}^{0,\frac{1}{Z_k}} h_k - h_k$, we have the following useful lemma:

\begin{lemma}
    \label{lemma:q_to_h}
    For any $k\geq 1$, we have
    $
        \eta_k T_{\pi_{k+1}\sep\pi_k}^{\frac{1}{\eta_k},0} q_k = Z_k T_{\pi_{k+1}}^{0,\frac{1}{Z_k}} h_k - Z_{k-1} T_{\pi_k}^{0,\frac{1}{Z_{k-1}}} h_{k-1}.
    $
    For $k=0$, we have
    $
        \eta_0 T_{\pi_1\sep\pi_0}^{\frac{1}{\eta_0},0}q_0 = Z_0 T_{\pi_1}^{0,\frac{1}{\eta_0}} h_0 - \gamma P \hc(\pi_0).
    $
\end{lemma}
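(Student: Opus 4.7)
The plan is to expand both sides of the identity in terms of the defining formulas for the regularized Bellman operators and then show that the difference reduces to a clean identity that follows from the softmax form of $\pi_k$. Since
$T_{\pi\sep\mu}^{\lambda,0}q = r + \gamma P\bigl(\langle\pi,q\rangle - \lambda \kl(\pi\|\mu)\bigr)$
and $T_{\pi}^{0,\tau}q = r + \gamma P\bigl(\langle\pi,q\rangle + \tau \hc(\pi)\bigr)$, multiplying out the LHS gives $\eta_k r + \gamma P\bigl(\eta_k\langle\pi_{k+1},q_k\rangle - \kl(\pi_{k+1}\|\pi_k)\bigr)$, while the RHS gives $(Z_k - Z_{k-1})r + \gamma P\bigl(Z_k\langle\pi_{k+1},h_k\rangle + \hc(\pi_{k+1}) - Z_{k-1}\langle\pi_k,h_{k-1}\rangle - \hc(\pi_k)\bigr)$. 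Since $Z_k - Z_{k-1} = \eta_k$ by definition of $Z_k$, the reward contributions match, and everything collapses to a single equality of the arguments inside $\gamma P$.

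The next step is to simplify the $h$-terms in the RHS using the recursion $Z_k h_k = Z_{k-1} h_{k-1} + \eta_k q_k$ from Proposition~\ref{thm:equivalence}. Taking an inner product with $\pi_{k+1}$ yields $Z_k\langle\pi_{k+1},h_k\rangle = Z_{k-1}\langle\pi_{k+1},h_{k-1}\rangle + \eta_k\langle\pi_{k+1},q_k\rangle$, and after canceling the $\eta_k\langle\pi_{k+1},q_k\rangle$ term on both sides the lemma reduces to proving
\begin{equation*}
-\kl(\pi_{k+1}\|\pi_k) = Z_{k-1}\langle\pi_{k+1} - \pi_k, h_{k-1}\rangle + \hc(\pi_{k+1}) - \hc(\pi_k).
\end{equation*}

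The key observation is that $\pi_k = \gc^{0,1/Z_{k-1}}(h_{k-1}) \propto \exp(Z_{k-1} h_{k-1})$, so $\ln \pi_k = Z_{k-1} h_{k-1} - c_k$ for a state-dependent normalizer $c_k$. Expanding $\kl(\pi_{k+1}\|\pi_k) = \langle\pi_{k+1}, \ln\pi_{k+1} - \ln\pi_k\rangle$ then gives $-\hc(\pi_{k+1}) - Z_{k-1}\langle\pi_{k+1},h_{k-1}\rangle + c_k$, and the trivial identity $\kl(\pi_k\|\pi_k) = 0$ pins down $c_k = \hc(\pi_k) + Z_{k-1}\langle\pi_k, h_{k-1}\rangle$. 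Substituting yields exactly the required identity, which I expect to be the only non-routine step in the argument.

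For the base case $k=0$, the same expansion reduces the claim to $-\kl(\pi_1\|\pi_0) = \hc(\pi_1) - \hc(\pi_0)$, which holds because $\pi_0$ is the uniform policy (as stipulated right after Eq.~\eqref{eq:GVI_noentropy1}): then $\hc(\pi_0) = \ln|\mathcal{A}|$ and a direct computation gives $\kl(\pi_1\|\pi_0) = -\hc(\pi_1) + \ln|\mathcal{A}|$. The main obstacle is recognizing the correct softmax identity in the inductive step; once written down, the rest is bookkeeping using $Z_k - Z_{k-1} = \eta_k$ and the definition of $h_k$.
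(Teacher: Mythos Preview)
Your proposal is correct and follows essentially the same approach as the paper: both arguments expand the operators, match the reward via $Z_k - Z_{k-1} = \eta_k$, and then use the recursion $Z_k h_k = Z_{k-1} h_{k-1} + \eta_k q_k$ together with the softmax form $\ln \pi_k = Z_{k-1} h_{k-1} - c_k$ from Proposition~\ref{thm:equivalence}. The only organizational difference is that the paper routes the computation through the log-sum-exp identity (Eq.~\eqref{eq:maximum}) to recognize each normalizer as a soft-max value, whereas you pin down $c_k$ directly via $\kl(\pi_k\|\pi_k)=0$; this is slightly more elementary but the content is identical.
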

\begin{proof}
Using the definition of $\pi_k$ and $h_k$, the following equation holds.
    \begin{align}
        \eta_k q_k + \ln\pi_k = \eta_k q_k + (Z_{k-1} h_{k-1} - \ln\langle 1, \exp {Z_{k-1} h_{k-1}}\rangle)
        = Z_k h_k - \ln\langle 1, \exp {Z_{k-1} h_{k-1}}\rangle.
    \end{align}
Therefore, we have
$
        \langle \pi, \eta_k q_k\rangle - \kl(\pi||\pi_k) = \langle \pi, Z_k h_k\rangle - \langle \pi, \ln\pi\rangle - \ln\langle \un, \exp{Z_{k-1} h_{k-1}}\rangle.
$
From Eq.~\eqref{eq:maximum}, the maximum of $\langle \pi, Z_k h_k\rangle - \langle \pi, \ln\pi\rangle$ is $\ln \langle \un, \exp Z_k h_k\rangle$, and the maximizer is $\pi_{k+1}$ from the definition. 
By substituting $\pi_{k+1}$ to $\pi$, the following equation holds:
    \begin{equation}
        \langle \pi_{k+1}, \eta_k q_k\rangle - \kl(\pi_{k+1}||\pi_k) = Z_k \frac{1}{Z_k} \ln \langle \un, \exp{Z_k h_k}\rangle - Z_{k-1} \frac{1}{Z_{k-1}} \ln\langle \un, \exp{Z_{k-1} h_{k-1}}\rangle.
    \end{equation}
From Eq.~\eqref{eq:maximum}, $\frac{1}{Z_k} \ln \langle 1, \exp{Z_k h_k}\rangle$ is the maximum of $\langle\pi, h_k\rangle + \frac{1}{Z_k} \hc(\pi)$, and the associated maximizer is again $\pi_{k+1}$. 
    Hence, the following equation holds:
    \begin{align}
        \langle \pi_{k+1}, \eta_k q_k\rangle &- \kl(\pi_{k+1}||\pi_k)= Z_k\left(\langle\pi_{k+1}, h_k\rangle + \frac{1}{Z_k}\hc(\pi_{k+1})\right) - Z_{k-1} \left(\langle\pi_{k}, h_{k-1}\rangle + \frac{1}{Z_{k-1}}\hc(\pi_{k})\right). 
    \end{align}
    Observing that $\eta_k r = Z_k r - Z_{k-1} r$, we have the first part of the result:
    $
        \eta_k T_{\pi_{k+1}\sep\pi_k}^{\frac{1}{\eta_k},0} q_k = Z_k T_{\pi_{k+1}}^{0,\frac{1}{Z_k}} h_k - Z_{k-1} T_{\pi_k}^{0,\frac{1}{Z_{k-1}}} h_{k-1}.
    $
    For $k=0$, using the fact that $h_0 = q_0$, 
    \begin{equation}
        \eta_0 T_{\pi_1\sep\pi_0}^{\frac{1}{\eta_0},0} q_0 = \eta_0 r + \gamma P (\langle \pi_1, \eta_0 h_0\rangle + \eta_0 \frac{1}{\eta_0} \hc(\pi_1) + \eta_0 \frac{1}{\eta_0} \langle\pi_1, \ln\pi_0\rangle) = \eta_0 T_{\pi_1}^{0,\frac{1}{\eta_0}}h_0 - \gamma P \hc(\pi_0),
    \end{equation}
    where we use in the last line the fact that $\pi_0$, being uniform,
    $
        \langle\pi_1, \ln\pi_0\rangle = - \ln |\A| = -\hc(\pi_0).
    $
    This concludes the proof.
\end{proof}
Using Lemma~\ref{lemma:q_to_h}, we can provide induction on $h_k$.
\begin{lemma}
    \label{lemma:h_bellman_tau_zero}
    Define $E_k = -\sum_{j=1}^k \eta_j \epsilon_j$ and $X_k=\sum_{j=0}^k (\eta_{j+1} - \eta_j) T^{\frac{1}{\eta_j},0}_{\pi_{j+1}|\pi_j}q_j$.
    For any $k\geq 1$, we have
    $
        h_{k+1} = \frac{Z_k}{Z_{k+1}} T_{\pi_{k+1}}^{0,\frac{1}{Z_k}} h_k
        + \frac{1}{Z_{k+1}} \left(\eta_0 q_0 - E_{k+1} + X_k -\gamma P\hc(\pi_0)\right).
    $
\end{lemma}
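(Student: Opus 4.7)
The plan is to prove the identity by a single direct expansion of $Z_{k+1}h_{k+1}=\sum_{j=0}^{k+1}\eta_j q_j$, with no outer induction: Lemma~\ref{lemma:q_to_h} already encodes the needed one-step telescope, so the work is purely algebraic bookkeeping on top of that.

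First, I would peel off the $j=0$ term and, for each $j\geq 1$, use the update $q_j = T^{1/\eta_{j-1},0}_{\pi_j\sep\pi_{j-1}}q_{j-1}+\epsilon_j$ to split
\[
\eta_j q_j = \eta_{j-1}T^{1/\eta_{j-1},0}_{\pi_j\sep\pi_{j-1}}q_{j-1} + (\eta_j-\eta_{j-1})T^{1/\eta_{j-1},0}_{\pi_j\sep\pi_{j-1}}q_{j-1} + \eta_j\epsilon_j.
\]
Summing $j$ from $1$ to $k+1$, the last piece contributes $\sum_{j=1}^{k+1}\eta_j\epsilon_j=-E_{k+1}$ by the definition of $E_{k+1}$, and the middle piece, after reindexing $j\mapsto j+1$, is exactly $X_k$.

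The core step is to collapse the remaining sum $S_k:=\sum_{j=1}^{k+1}\eta_{j-1}T^{1/\eta_{j-1},0}_{\pi_j\sep\pi_{j-1}}q_{j-1}$. Introducing $A_j:=Z_j T^{0,1/Z_j}_{\pi_{j+1}}h_j$, Lemma~\ref{lemma:q_to_h} gives $\eta_{j-1}T^{1/\eta_{j-1},0}_{\pi_j\sep\pi_{j-1}}q_{j-1}=A_{j-1}-A_{j-2}$ for $j\geq 2$, and $\eta_0 T^{1/\eta_0,0}_{\pi_1\sep\pi_0}q_0 = A_0 - \gamma P\hc(\pi_0)$ for $j=1$ (using $Z_0=\eta_0$, so $1/\eta_0=1/Z_0$). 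Thus $S_k$ telescopes to $A_k-\gamma P\hc(\pi_0) = Z_k T^{0,1/Z_k}_{\pi_{k+1}}h_k - \gamma P\hc(\pi_0)$.

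Combining the three contributions yields
\[
Z_{k+1}h_{k+1} = \eta_0 q_0 + Z_k T^{0,1/Z_k}_{\pi_{k+1}}h_k - \gamma P\hc(\pi_0) + X_k - E_{k+1},
\]
and dividing by $Z_{k+1}$ is the claim. The only delicate point I anticipate is the boundary at $j=1$: one must carefully check that the $-\gamma P\hc(\pi_0)$ is produced exactly once by the $k=0$ case of Lemma~\ref{lemma:q_to_h} and is not duplicated by the telescope at $j=2$, which hinges on the identification $Z_0=\eta_0$ knitting the two cases of that lemma together. Everything else is routine rewriting.
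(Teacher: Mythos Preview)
Your proposal is correct and is essentially the same argument as the paper's: both expand $Z_{k+1}h_{k+1}=\sum_{j=0}^{k+1}\eta_j q_j$, use the update rule to split each $\eta_j q_j$ into an $\eta_{j-1}T$-part, a $(\eta_j-\eta_{j-1})T$-part, and an error part, then invoke Lemma~\ref{lemma:q_to_h} to telescope the first part down to $Z_kT^{0,1/Z_k}_{\pi_{k+1}}h_k-\gamma P\hc(\pi_0)$. The only cosmetic difference is indexing (the paper peels off $\eta_0 q_0$ and $\eta_1 q_1$ and sums over $j+1$, you peel off $\eta_0 q_0$ and sum over $j$), and your handling of the $j=1$ boundary via $Z_0=\eta_0$ is exactly the paper's $k=0$ case of Lemma~\ref{lemma:q_to_h}.
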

\begin{proof}
    Using the definition of $h_k$, Lemma~\ref{lemma:q_to_h}, and the fact that $q_{k+1} = T_{\pi_{k+1}\sep\pi_k}^{\frac{1}{\eta_k},0} q_k + \epsilon_{k+1}$, we have
    \begin{align}
        &Z_{k+1} h_{k+1} = \sum_{j=0}^{k+1} \eta_j q_j
        \nonumber = \eta_0 q_0 + \eta_1 q_1 + \sum_{j=1}^k \eta_{j+1} q_{j+1} \nonumber \\
        &= \eta_0 q_0 + \left((\eta_1-\eta_0)+\eta_0\right) T_{\pi_1\sep\pi_0}^{\frac{1}{\eta_0},0} q_0 + \eta_1 \epsilon_1 + \sum_{j=1}^k \left(\left((\eta_{j+1} - \eta_j) + \eta_j\right) T_{\pi_{j+1}}^{\frac{1}{\eta_j},0} q_j + \eta_{j+1} \epsilon_{j+1}\right) \nonumber\\
        &= \eta_0 q_0 + \left(Z_0 T_{\pi_1}^{0,\frac{1}{\eta_0}}h_0 - \gamma P \hc(\pi_0)\right) + \sum_{j=1}^k\left(Z_j T_{\pi_{j+1}}^{0,\frac{1}{Z_j}} h_j - Z_{j-1} T_{\pi_{j}}^{0,\frac{1}{Z_{j-1}}} h_{j-1}\right) + X_k - E_{k+1} \nonumber
        \\
        &= \eta_0 q_0 +X_k - E_{k+1} -\gamma P\hc(\pi_0) + Z_k T_{\pi_{k+1}}^{0,\frac{1}{Z_k}} h_k 
        \\
        &\Leftrightarrow
        h_{k+1}  = \frac{Z_k}{Z_{k+1}} T_{\pi_{k+1}}^{0,\frac{1}{Z_k}} h_k + \frac{1}{Z_{k+1}} \left(\eta_0 q_0 - E_{k+1} + X_k -\gamma P\hc(\pi_0)\right).
    \end{align}
\end{proof}

Using Lemma~\ref{lemma:h_bellman_tau_zero} and the fact that $Z_{k+1} h_{k+1} = Z_k h_k + \eta_{k+1} q_{k+1}$, we have
$
    T_{\pi_{k+1}}^{0,\frac{1}{Z_k}} h_k - h_k = \frac{1}{Z_k}\left(\eta_{k+1} q_{k+1} - \eta_0 q_0 + E_{k+1} - X_k + \gamma P \hc(\pi_0)\right). 
$
Injecting this last result into decomposition~\eqref{eq:appx:davi10:decomposition2}, we get
\begin{align}
    & q_* - q_{\pi_{k+1}} \leq (I-\gamma P_{\pi_*})^{-1}(T_{\pi_{k+1}}^{0,\frac{1}{Z_k}} h_k - h_k)
    - (I-\gamma P_{\pi_{k+1}})^{-1}(T_{\pi_{k+1}}^{0,\frac{1}{Z_k}}h_k - h_k - \gamma P\hc(\pi_{k+1}))\nonumber
    \\
    &\leq (I-\gamma P_{\pi_*})^{-1}\left(\frac{1}{Z_k}\left(Y_k + \gamma P \hc(\pi_0)\right)\right) - (I-\gamma P_{\pi_{k+1}})^{-1}\left(\frac{1}{Z_k}\left(Y_k - \gamma P\hc(\pi_{k+1})\right)\right),
\end{align}
where we write $Y_k=\eta_{k+1}q_{k+1} - \eta_0 q_0 + E_{k+1} - X_k$ for the uncluttered notation and the last inequality holds, since $-(I-\gamma P_{\pi_{k+1}})^{-1} P \hc(\pi_0) \leq 0$.
Next, using the fact that $q_* - q_{\pi_{k+1}}\geq 0$ and rearranging terms, we have
\begin{align}\label{eq:b4conclusion}
    q_* - q_{\pi_{k+1}} \leq &\left|\left((I-\gamma P_{\pi_*})^{-1}- (I-\gamma P_{\pi_{k+1}})^{-1}\right)\frac{E_{k+1}}{Z_k}\right|\nonumber
    \\
    &+ (I-\gamma P_{\pi_*})^{-1}\left|\frac{1}{Z_k}\left(\eta_{k+1} q_{k+1} - \eta_0 q_0 - X_k + \gamma P \hc(\pi_0)\right)\right|\nonumber \\
    &+ (I-\gamma P_{\pi_{k+1}})^{-1}\left| \frac{1}{Z_k}\left(\eta_{k+1} q_{k+1} - \eta_0 q_0 - X_k + \gamma P \hc(\pi_{k+1})\right)\right|.
\end{align}
From the assumptions $\|q_{k}\|_\infty\leq q_\text{max}$ for all $k$, we have
$
    \|X_k\|_\infty = \|\sum^k_{j=0} (\eta_{j+1} - \eta_j) T^{\frac{1}{\eta_j},0}_{\pi_{j+1}|\pi_j}q_j\|_\infty \leq q_{\text{max}} \sum^k_{j=0}|\eta_{j+1}-\eta_j|.
$
Combined with Eq.~\eqref{eq:b4conclusion}, we have
\begin{equation}
    \| q_* - q_{\pi_{k+1}}\|_\infty \leq \frac{2}{(1-\gamma)Z_k}\left(\left\|\sum_{j=1}^k \eta_j\epsilon_j\right\|_\infty + (\eta_{k+1} + \eta_0 + \sum^k_{j=0}\left|\eta_{j+1}-\eta_j\right|)q_\text{max}+\gamma\ln|A|\right).
\end{equation}

\section{Proof of Theorem~\ref{thm:GVI2}}\label{apdx:GVI2}

Define for any $k\geq 0$ the term  ${q'}_k=\lambda_{k+1}\left(q_k-\ln \pi_k\right)$.
By basic calculus, the evaluation step of Eq.~\ref{eq:GVI2} can be transformed as

\begin{align}
& \;\; q_{k+1} = \frac{r}{\lambda_{k+1}} + \ln \pi_{k+1}  + \frac{\lambda_k}{\lambda_{k+1}} \gamma P\left\langle\pi_{k+1}, q_{k}-\ln \pi_{k+1} \right\rangle \nonumber \\ 
\Leftrightarrow
& \;\; \lambda_{k+1}\left({q}_{k+1} - \ln \pi_{k+1}\right) =r + \gamma P\left\langle\pi_{k+1}, \lambda_k \left({q}_{k} -\ln \pi_k \right) \right\rangle - \lambda_{k} \left\langle \pi_{k+1}, \ln \pi_{k+1}-\ln \pi_{k} \right\rangle \nonumber \\
\Leftrightarrow
& \;\; {q'}_{k+1} = r + \gamma P\left\langle\pi_{k+1}, {q'}_{k} \right\rangle - \lambda_{k} \kl({\pi_{k+1}}\|{\pi_{k}}) .
\end{align}

For the greedy step, we have
\begin{align}
    \argmax_\pi \left\langle\pi, q_{k}\right\rangle + \hc(\pi) &\propto \exp \left(q_k\right) = \pi_k \exp\left(\frac{{q'}_k}{\lambda_k}\right)\nonumber\\
    &\propto \argmax_\pi \left\langle\pi, {q'}_{k}\right\rangle + \kl \left(\pi \| \pi_k \right) .
\end{align}

Therefore, we have shown that 
\begin{align}
    &\begin{cases}
    \pi_{k+1}=\argmax_{\pi\in\Delta^\s_\A}\left\langle\pi, q_{k}\right\rangle + \hc(\pi)\\
    q_{k+1}=  \ln \pi_{k+1} + \frac{r}{\lambda_{k+1}} + \frac{\lambda_k}{\lambda_{k+1}} \gamma P\left\langle\pi_{k+1}, q_{k}-\ln \pi_{k+1} \right\rangle
    \end{cases} \nonumber\\
    \Leftrightarrow
    &\begin{cases}
    \pi_{k+1}=\argmax_{\pi\in\Delta^\s_\A}\left\langle\pi, {q'}_{k}\right\rangle - \lambda_k\kl({\pi}\|{\pi_k})\\
    {q'}_{k+1}=r + \gamma P\left\langle\pi_{k+1}, {q'}_{k} - \lambda_{k} \kl({\pi_{k+1}}\|{\pi_{k}})\right\rangle
    \end{cases}.
\end{align}

\section{Hyperparameters} \label{apdx:hypers}
Table.~\ref{tab:shared_params_1} lists the hyperparameters used in the comparative evaluation in Section.~\ref{sec:experiments}.

\begin{table}[t]
\renewcommand{\arraystretch}{1.1}
\centering
\caption{Hyperparameters of algorithms in deep RL experiments}
\label{tab:shared_params_1}
\vspace{1mm}
\begin{tabular}{l l| l }
\toprule
\multicolumn{2}{l|}{Parameter} &  Value\\
\midrule
\multicolumn{2}{l|}{\it{Shared}}& \\
& optimizer &Adam \\
& learning rate & $10^{-4}$\\
& discount factor ($\gamma$) &  0.99\\
& replay buffer size & $10^6$\\
& number of hidden layers  & 2\\
& number of hidden units per layer & 256\\
& number of samples per minibatch & 32\\
& activations & ReLU\\
\bottomrule
\end{tabular}
\end{table}

\section{Maze Environment Details}\label{apdx:maze}

\begin{wrapfigure}[12]{r}{0.3\textwidth}
  \begin{center}
    \vspace{-15pt}
    \includegraphics[width=0.2\textwidth]{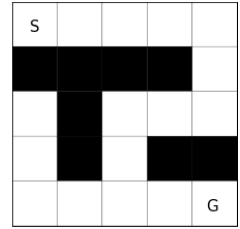}
    \vspace{-10pt}
    \caption{Example of a generated maze.}
  \end{center}\label{fig:maze-sample}
\end{wrapfigure}

For the tabular experiments, we use randomly generated $5\times5$ mazes.
Figure~\ref{fig:maze-sample} shows a sample maze used in the experiment.
The agent starts from a fixed position marked with {\it S} and can move to any of its neighboring states with success probability $0.9$, or to a different random direction with probability $0.1$.
The agent receives $+1$ reward when it reaches the goal marked with {\it G}, and the environment terminates after $25$ steps.
The agent cannot enter the black tiles.

\end{document}